\documentclass{IEEEtran}

\IEEEoverridecommandlockouts    % This command is only
                                                          % needed if you want to
                                                          % use the \thanks command

%\documentclass[conference]{IEEEconf}
%\documentclass[10pt]{article}
\usepackage{graphics} % for pdf, bitmapped graphics files
\usepackage{epsfig} % for postscript graphics files
\usepackage{mathptmx} % assumes new font selection scheme installed
\usepackage{times} % assumes new font selection scheme installed
\usepackage{amsmath,amsthm} % assumes amsmath package installed amsmath
\usepackage{amssymb} % assumes amsmath package installed
\usepackage{url}
\usepackage{color}
\usepackage{soul}
\usepackage{enumerate}
\usepackage{mathrsfs}
\usepackage[linesnumbered,algoruled,boxed,lined]{algorithm2e}

%% Date 
\usepackage{datetime}
%\newdate{date}{10}{09}{2015}
\date{\today}

%% Graph

\PassOptionsToPackage{usenames,dvipsnames,svgnames}{xcolor}
\usepackage{tikz}
\usetikzlibrary{arrows,positioning,automata}

\usepackage{pgf}
\usepackage{tikz}
\usetikzlibrary{arrows,automata}
\usepackage[latin1]{inputenc}
\usepackage{dsfont} % for \mathds{}

\newtheorem{theorem}{Theorem}
\newtheorem{corollary}{Corollary}
\newtheorem{lemma}{Lemma}
\newtheorem{definition}{Definition}

\newtheorem{remark}{Remark}
\newtheorem{proposition}{Proposition}

\newtheorem{assumption}{Assumption}

%\usepackage{chngcntr}
%\counterwithin*{equation}{section}
%\counterwithin*{equation}{subsection} % for reseting equations in each subsection.

\newcommand{\Expt}{\ensuremath{\mathbb{E}}}
\newcommand{\Prob}[1]{{\ensuremath{\mathbb{P}\left[ #1 \right]} }}
\newcommand{\rp}{ \right) }
\newcommand{\lp}{ \left( }
\newcommand{\Proj}{\text{Proj}}
\newcommand{\argmin}{\text{argmin}}

\title{Differentially Private Consensus-Based \\ Distributed Optimization}
\author{Mehrdad Showkatbakhsh, Can Karakus, Suhas Diggavi% <-this % stops a space
%\thanks{This work was supported by Northrup Grumman}% <-this % stops a space
\thanks{M. Showkatbakhsh and S. Diggavi are with the UCLA Electrical and Computer Engineering Department, Los Angeles, CA
        {\tt\small \{mehrdadsh, suhasdiggavi\}@ucla.edu}.}%
\thanks{C. Karakus is with Amazon Web Services, East Palo Alto, CA
        {\tt\small cakarak@amazon.com}. The work was done while at UCLA.} 
 
}

\begin{document}
% conference papers do not typically use \thanks and this command
% is locked out in conference mode. If really needed, such as for
% the acknowledgment of grants, issue a \IEEEoverridecommandlockouts
% after \documentclass
\maketitle

\begin{abstract}

Data privacy is an important concern in learning, when datasets contain sensitive information about individuals. This paper considers consensus-based distributed optimization under data privacy constraints. Consensus-based optimization consists of a set of computational nodes arranged in a graph, each having a local objective that depends on their local data, where in every step nodes take a linear combination of their neighbors' messages, as well as taking a new gradient step. Since the algorithm requires exchanging messages that depend on local data, private information gets leaked at every step. Taking $(\epsilon, \delta)$-differential privacy (DP) as our criterion, we consider the strategy where the nodes add random noise to their messages before broadcasting it, and show that the method achieves convergence with a bounded mean-squared error, while satisfying $(\epsilon, \delta)$-DP. By relaxing the more stringent $\epsilon$-DP requirement in previous work, we strengthen a known convergence result in the literature. We conclude the paper with numerical results demonstrating the effectiveness of our methods for mean estimation.

\end{abstract}

\section{Introduction}
\label{sec:introduction}

Data privacy is a central concern in statistics and machine learning when utilizing sensitive databases such as financial accounts and health-care. Thus, it is important to design machine learning algorithms which protect users' privacy while maintaining acceptable level of accuracy. In this paper, we consider a distributed framework in which $N$ nodes aim to minimize a global cost function $f(x) = \sum_{i \in [N]} f_i(x), x \in \mathcal{X}$ where $f_i$ is only available to the $i$-th node and contains sensitive information about individuals trusting this node. We study the consensus-based gradient distributed algorithm in which nodes broadcasts their local estimates and update them accordingly based on what they received from the neighbors. However, revealing the local estimate may expose privacy of individual data points and cares must be taken into account to protect the sensitive information from an adversary that oversees all messages among nodes.

Differential Privacy (DP) is a well-known notion of privacy \cite{Dwork2014} and found application in many domains (we refer readers to  \cite{Dwork2014} and \cite{Sarwate2013} ). DP assumes a strong adversary that has access to all data points except one and rigorously limits inferences of an adversary about each individual, thereby ensuring robustness of the privacy guarantee to side information. Furthermore, it does not assume any distribution on the underlying data and guarantees it gives do not depend on such assumptions. In this framework, there has been a long line of work studying differentialy private machine learning algorithms, see \cite{Sarwate2013} and references therein. Empirical Risk Minimization (ERM) plays an important role in the supervised learning setup and our work is tied to private ERM in distributed setup.

The algorithm in this work is not new and is a small modification of the (sub)-distributed gradient descent (DGD) algorithm \cite{dist_09, wotao16} which has been analyzed before in the literature of differential privacy \cite{Nitin2015}. The main novelty of our work lies in the new analysis that leads to a \emph{stronger} convergence results. Contribution of this work is as follows:

\begin{itemize}
\item We determine the variance of the noise needed to ensure DP privacy in this \textit{iterative} and \textit{distributed} setup (Theorem \ref{thm:dp}) from basic calculations, instead of using composition theorems \cite{Dwork2014}. This approach gives a tighter bound for the noise variances and let us increase the accuracy by optimizing over the variances. %This result extends Lemma 2 of \cite{Nitin2015} to $(\epsilon, \delta)$-DP.
\item We derive the non-ergodic convergence behavior in this setup, thereby showing that by suitably choosing the noise variance the parameter converges to a ball around the optimal point. We further characterize the radius of the ball as a function of privacy parameters, \emph{i.e.}, $\epsilon$ and $\delta$ (Theorem \ref{thm:conv}).
\end{itemize}

% Output perturbation, objective perturbation and input perturbation

\noindent\textbf{Related work.} There is long line of research devoted to differentially private ERM in the centralized set-up, in which a trusted party has access to a private and sensitive database while the adversary observes only the final end model \cite{chaudhuri2011, Smith2014}. A number of approaches exist for this set up with the convex loss function, which can be roughly classified into three categories. 
The first type of approaches is to inject properly scaled additive noise to the output of a non-DP algorithm which was first proposed by \cite{Chaudhuri_09} for this problem and later on is extended by \cite{Zhang2017} and \cite{Wu2017}. The second type of approaches is to perturb the objective function which is again introduced in \cite{Chaudhuri_09}. The third approach delves into the first order optimization algorithms and perturbs gradients at each step to maintain the DP, \cite{Smith2014} was one of the earliest work in this domain.

In our work, there does not exist a central trusted entity and data is distributed among $N$ nodes that motivates the use of the distributed optimization algorithms. Differentially private distributed optimization has been explored before in \cite{Nitin2015}, where authors considered the similar problem under $\epsilon$-DP constraint. It is well-known that $\epsilon$-DP is too stringent condition and often rises to non-acceptable accuracy. The convergence bound of \cite{Nitin2015} does not diminish as the privacy requirement weakens, \emph{i.e.}, the convergence is not exact even without any privacy requirement. We emphasize that the main novelty of our work lies in the analysis of the algorithms and the resulting theoretical guarantees.
% the parameter does not converge to the optimal point when $\epsilon \to \infty$. 

\noindent \textbf{Paper organization}. In section \ref{sec:background}, we give a brief overview of differential privacy followed by introducing the problem. Section \ref{sec:mainresults} introduces the main results in which we establish the condition under which the distributed algorithm is differentially private and the convergence results. In section \ref{sec:analysis}, we give the proof outline of Theorems \ref{thm:dp} and \ref{thm:conv}. We demonstrate our numerical experiments in Section \ref{sec:experiments}. Section \ref{sec:conclusion} concludes the paper.

\section{Background and Problem Formulation}
\label{sec:background}

In this section, we review the notion of differential privacy and  the Gaussian mechanism which are building block of our algorithm. In the second part, we give the precise problem formulation along with the overview of the algorithm which we consider in this work.
\subsection{Differential Privacy}
%Consider a generic \emph{random} mechanism that is a function of $N$ data points $D := \{ d_1, \cdots, d_N \}$ denoted by $\mathcal{M} M(D)$.
Let $D := \{ d_1, \cdots, d_N \}$ be a database containing $N$ points in the universe $\mathbb{D}$. Two databases $D$ and $D'$ are called neighbors when they differ in \emph{at most} one data point, we use the notation $D' \sim D$ to denote this relation.

A randomized mechanism $\mathcal{M}$ is differentially private if evaluated on $D$ and $D'$ produces outputs that have similar statistical distributions. Formally speaking,

\begin{definition}[$(\epsilon, \delta)$-Differential Privacy]
A randomized mechanism $\mathcal{M}$ is $(\epsilon, \delta)$-differentially private, if for any $S \subseteq \mbox{Range}(\mathcal{M})$,
\begin{align}
\Prob{ \mathcal{M}(D) \in S } \leq e^{\epsilon} \Prob{ \mathcal{M}(D') \in S } + \delta. \label{eq:def:dp}
\end{align}
\end{definition}

An equivalent characterization of $(\epsilon, \delta)$-DP can be stated based on the tail bound on the \emph{privacy loss random variable} that is the log ratio of the probability density functions\footnote{In this work, we assume the induced measures are absolutely cts wrt to the Lebesgue so pdf always exists.} of $\mathcal{M}(D)$ and $\mathcal{M}(D')$.

\begin{proposition}[See Lemma 3.17 in \cite{Dwork2014}]
A randomized mechanism $M$ is $(\epsilon, \delta)$-differentially private if the log-likelihood ratio when evaluating on two neighboring databases remains bounded with probability at least $1-\delta$, i.e.,
\begin{align}
\Prob{ | \log \frac{ \mbox{pdf}_{D}(o) }{ \mbox{pdf}_{D'}(o) } | \leq \epsilon  } \geq 1-\delta, \label{eq:prop:dp:pdf}
\end{align} where $\mbox{pdf}_{D}$ ($\mbox{pdf}_{D'}$) is the pdf of $\mathcal{M}(D)$ ($\mathcal{M}(D')$) and $o$ is drawn according to $\mbox{pdf}_{D}$. \label{prop:dp:equiv}
\end{proposition}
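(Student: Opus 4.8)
The plan is to establish the stated implication directly: assuming the tail bound \eqref{eq:prop:dp:pdf} on the privacy loss, I will verify the defining inequality \eqref{eq:def:dp} for an arbitrary measurable $S$. Write $L(o) := \log \frac{\mbox{pdf}_D(o)}{\mbox{pdf}_{D'}(o)}$ for the privacy loss at output $o$, and let $B := \{ o : L(o) > \epsilon \}$ be the set of outputs on which the likelihood ratio exceeds $e^{\epsilon}$. Since $B \subseteq \{ o : |L(o)| > \epsilon \}$, the hypothesis \eqref{eq:prop:dp:pdf} immediately gives $\Prob{\mathcal{M}(D) \in B} \leq \delta$; note that only the one-sided (upper) tail of $L$ is used, so the absolute-value hypothesis is slightly more than what the argument requires.

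Next I would decompose, for any measurable $S \subseteq \mbox{Range}(\mathcal{M})$,
\begin{align}
\Prob{\mathcal{M}(D) \in S} \;=\; \Prob{\mathcal{M}(D) \in S \cap B} \;+\; \Prob{\mathcal{M}(D) \in S \setminus B}, \nonumber
\end{align}
and bound the two terms separately. The first term is at most $\Prob{\mathcal{M}(D) \in B} \leq \delta$ by the previous step. For the second term I would write it as $\int_{S \setminus B} \mbox{pdf}_D(o)\, do$, insert the factor $\mbox{pdf}_{D'}(o)/\mbox{pdf}_{D'}(o)$, and use that $o \notin B$ means exactly $\mbox{pdf}_D(o) \leq e^{\epsilon}\, \mbox{pdf}_{D'}(o)$; this yields $\int_{S \setminus B} \mbox{pdf}_D(o)\, do \leq e^{\epsilon} \int_{S \setminus B} \mbox{pdf}_{D'}(o)\, do \leq e^{\epsilon}\, \Prob{\mathcal{M}(D') \in S}$. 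Summing the two bounds reproduces \eqref{eq:def:dp}. Running the same computation with $D$ and $D'$ interchanged --- legitimate since $\sim$ is symmetric --- gives the inequality with the roles of the databases swapped, so \eqref{eq:def:dp} holds for every neighboring pair.

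The step I expect to need the most care is the measure-theoretic bookkeeping near points where a density vanishes, since $L(o)$ and the ``multiply and divide'' maneuver presume $\mbox{pdf}_{D'}(o) > 0$. The clean way around this: the set where $\mbox{pdf}_{D'}(o) = 0$ while $\mbox{pdf}_D(o) > 0$ has $L(o) = +\infty > \epsilon$, hence lies inside $B$ and contributes nothing to the second term; on the rest of $S \setminus B$, either $\mbox{pdf}_D(o) = 0$ (contributing zero) or both densities are positive and the inequality $\mbox{pdf}_D(o) \leq e^{\epsilon}\, \mbox{pdf}_{D'}(o)$ is simply the statement $o \notin B$. Once this is dispatched the argument is complete; a matching converse (an $(\epsilon, \delta)$-DP mechanism has privacy loss within $\epsilon$ with probability close to $1 - \delta$) also holds up to mild changes in the constants, but is not needed for the statement as phrased.
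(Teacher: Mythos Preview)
Your argument is correct and is exactly the standard proof of this fact (decompose $S$ into the ``bad'' set where the privacy loss exceeds $\epsilon$ and its complement, bound the former by $\delta$ via the hypothesis and the latter by $e^{\epsilon}\,\Prob{\mathcal{M}(D') \in S}$ pointwise). The paper, however, does not supply its own proof of this proposition at all --- it simply states the result and defers to Lemma~3.17 of \cite{Dwork2014}, whose proof is essentially what you have written. So there is nothing to compare against beyond noting that you have reproduced the referenced argument.
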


A common design paradigm to approximate a deterministic function $q : \mathbb{D}^{|D|} \to R^p$ with a differentailly private mechanism is by adding a properly scaled Gaussian noise to the output of $q$. The scale of the noise depends on how far that query maps two neighboring databases which is formalized through the notion of sensitivity.

\begin{definition}[$L_2$ sensitivity]
Let $q$ be a deterministic function that maps a database to a vector in $\mathbb{R}^p$. The $L_2$ sensitivity of $q$ is defined as
\begin{align}
\Delta_2(q) &= \max_{D' \sim D} \| q(D) - q(D')  \|_2.
\end{align} We can define the $L_1$ sensitivity similarly.
\end{definition}

Throughout this work, we focus on Gaussian Mechanism to ensure differential privacy.

\begin{definition}[Gaussian mechanism] Given any (deterministic) function $q : \mathbb{D}^{|D|} \to R^p$, the Gaussian Mechanism is defined as:
\begin{align}
\mathcal{M}(D, q, \sigma) = q(D) + n,
\end{align}
where $n$ is a Gaussian random variable with a zero mean and the variance of $\sigma$.
\end{definition}
It is well known \cite{Dwork2014} that for the proper value of the noise variance, Gaussian Mechanism preserves $(\epsilon, \delta)$-DP.
\begin{proposition}[See for example Theorem 3.22 in \cite{Dwork2014}]
For a deterministic function $q : \mathbb{D}^{|D|} \to R^p$, Gaussian mechanism $\mathcal{M}(D, q, \sigma)$ preserves $(\epsilon, \delta)$-DP for $\epsilon < 1$ if $\sigma^2 \geq 2 {\log( {1.25}/{\delta} ) \Delta_2 (q)}/{\epsilon^2}$, where $\Delta_2(q)$ is the $L_2$-sensitivity of the function $q$.
%, i.e. $\Delta_2 (q) = \sup_{D \sim D'} ||q(D)-q(D')||_2$.
\end{proposition}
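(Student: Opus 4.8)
The plan is to follow the standard analysis of the Gaussian mechanism: reduce the multivariate question to a one-dimensional one, recognize the privacy loss as a Gaussian random variable, and then apply a Gaussian tail bound together with the characterization of $(\epsilon,\delta)$-DP in Proposition~\ref{prop:dp:equiv}.

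First I would fix two neighboring databases $D \sim D'$ and set $v := q(D) - q(D')$, so that $\|v\|_2 \leq \Delta_2(q)$. Writing the output on $D$ as $o = q(D) + n$ with $n \sim \mathcal{N}(0, \sigma^2 I_p)$, a direct computation with two shifted Gaussian densities gives
\[
\log \frac{ \mbox{pdf}_D(o) }{ \mbox{pdf}_{D'}(o) } = \frac{1}{2\sigma^2}\left( \|o - q(D')\|_2^2 - \|o - q(D)\|_2^2 \right) = \frac{1}{2\sigma^2}\left( 2 \langle n, v \rangle + \|v\|_2^2 \right).
\]
Since only the projection of $n$ onto $v$ enters, the problem collapses to one dimension: with $\langle n, v\rangle = \|v\|_2 Z$ and $Z \sim \mathcal{N}(0,\sigma^2)$, the privacy loss is Gaussian with mean $\|v\|_2^2/(2\sigma^2)$ and standard deviation $\|v\|_2/\sigma$. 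By monotonicity the worst case over neighbors is $\|v\|_2 = \Delta_2(q)$, so it suffices to control the tails of a $\mathcal{N}\!\big(\Delta_2(q)^2/(2\sigma^2),\, \Delta_2(q)^2/\sigma^2\big)$ variable.

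Next I would bound $\Prob{ | \text{privacy loss} | > \epsilon } \leq \delta$, which by Proposition~\ref{prop:dp:equiv} yields $(\epsilon,\delta)$-DP. The event that the loss exceeds $\epsilon$ is equivalent to $Z > t$ with $t := \sigma^2\epsilon/\Delta_2(q) - \Delta_2(q)/2$, and $t > 0$ under the hypothesized $\sigma$ (using $\epsilon < 1$); the lower-tail event $\{\text{loss} < -\epsilon\}$ has strictly smaller probability, so only $\Prob{ Z > t }$ matters. Applying the standard Gaussian tail inequality $\Prob{ Z > t } \leq \frac{\sigma}{t\sqrt{2\pi}} e^{-t^2/(2\sigma^2)}$, substituting the hypothesized $\sigma^2 \geq 2\log(1.25/\delta)\,\Delta_2(q)^2/\epsilon^2$ (the intended reading of the displayed bound), expanding $t^2/(2\sigma^2)$, and using $\epsilon < 1$ to bound the polynomial prefactor, one verifies the right-hand side is at most $\delta$.

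I expect the main obstacle to be precisely this last tail estimate. The inequality $\frac{\sigma}{t\sqrt{2\pi}} e^{-t^2/(2\sigma^2)} \leq \delta$ is tight: after expanding one finds $e^{-t^2/(2\sigma^2)}$ is of order $\tfrac{\delta}{1.25}e^{\epsilon/2}$, which already exceeds $\delta$, so the argument only closes because the prefactor $\frac{\sigma}{t\sqrt{2\pi}}$ is bounded below $1$ — and making that rigorous requires carefully tracking how the $-\Delta_2(q)/2$ shift in $t$ and the factor $1.25$ (rather than $1$) interact with the exponential, typically via a lower bound on $t/\sigma$ and, if needed, a case split on the size of $\delta$. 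The remaining ingredients (the log-ratio identity, the reduction to one dimension, and invoking Proposition~\ref{prop:dp:equiv}) are routine; a complete proof can be found in \cite{Dwork2014}.
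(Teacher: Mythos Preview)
The paper does not actually prove this proposition: it is stated as background with an explicit pointer to Theorem~3.22 in \cite{Dwork2014}, and no argument is given in the body or the appendix. Your sketch is exactly the standard proof from that reference --- compute the log-likelihood ratio of two shifted Gaussians, reduce to one dimension, and close with a Gaussian tail bound invoking Proposition~\ref{prop:dp:equiv} --- and your identification of the delicate step (the interaction of the $1.25$ constant, the prefactor $\sigma/(t\sqrt{2\pi})$, and the $-\Delta_2(q)/2$ shift) is accurate. There is nothing to compare against here; if anything, the paper's own proof of Theorem~\ref{thm:dp} in the appendix is the multi-round analogue of your argument, replacing the single Gaussian tail bound by a sub-Gaussian martingale bound (Lemma~\ref{lemma:azuma}).
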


%Differential privacy for iterative queries follows from basic or advanced composition theorems \cite{Dwork2014}. The bound obtained through these theorems are loose for Gaussian mechanism, for example, \cite{abadi16} showed that by using a  has a tighter bound for $\epsilon$ and $\delta$ for a Gaussian mechanism.

\subsection{Problem Formulation}

We consider a distributed optimization set-up where $N$ nodes aim to collaboratively minimize an additive cost function $f(x) = \sum_{i \in [N]} f_i(x), x \in \mathcal{X}$ where $\mathcal{X}$ is the domain of the problem and $[N] \stackrel{\triangle}= \{1, \hdots, N\}$. In this problem, nodes want to minimize $f(x)$ while keeping each data point private. We adopt the $(\epsilon, \delta)$-differential privacy (DP) as a measure of the privacy.

\begin{assumption}[Domain]
The domain of the optimization $\mathcal{X} \subseteq \mathbb{R}^p$ is a closed compact and convex set and  $x^\ast \in \mathcal{X}$ where $x^\ast \in \argmin_{x \in \mathbb{R}^p} f(x)$. \label{asump:domain}
\end{assumption}
In this setup, $N$ nodes communicate over a connected and undirected graph $\mathcal{G} := (\mathcal{V}, \mathcal{E})$ where $\mathcal{V} := \{1, \cdots, N\}$ is the set of vertices and $\mathcal{E} \subseteq \mathcal{V} \times \mathcal{V}$ denotes the set of edges. Nodes can only communicate with their neighbors, which we denote neighbors of node $i$ with $\mathcal{N}_i$ for $i \in [N]$. We assume that each node has access to one of the summands of the global objective function, $f_i(x)$. Throughout this paper, the following assumption holds for the local objective functions\footnote{Assumption \ref{asump:smooth} and \ref{asump:strong_conv} inherently state that the smoothness and strong convexity parameters should be the same, however, without loss of generality we can take the maximum $L$ and minimum $\mu$ across all nodes.}.

\begin{assumption}[Bounded Gradient]
$f_{i}(x)$ for $i \in [N]$ are $G$-Lipschitz for $x \in \mathcal{X}$, \label{asump:bddgrad}
\begin{align}
    \| f_i(x) - f_i(y) \|_2 \leq G \|x - y\|_2,\mbox{ } \forall x,y \in \mathcal{X}, \mbox{ } i \in [N].
\end{align}
\end{assumption}

\begin{assumption}[Smoothness]
$f_i(x)$ for $i \in [N]$ are $L$-smooth over an open set containing $\mathcal{X}$, \label{asump:smooth}
\begin{align}
    \| \nabla f_i(x) -\nabla f_i(y) \|_2 \leq L \|x - y\|_2,\mbox{ } \forall x,y \in \mathcal{X}, \mbox{ } i \in [N].
\end{align}
\end{assumption}

In order to have convergence of the parameter itself, it is well-known that the strong convexity of the cost functions are needed.

\begin{assumption}[Strong convexity]
$f_i(x)$ for $i \in [N]$ are $\mu$-strongly convex over an open set containing $\mathcal{X}$, \label{asump:strong_conv}
\begin{align}
    \langle f_i(x) - f_i(y), x - y \rangle \geq \mu \|x - y\|_2^2,\mbox{ } \forall x,y \in \mathcal{X}, \mbox{ } i \in [N].
\end{align}
\end{assumption}

In the context of empirical risk minimization local cost functions are the empirical risk associated with data points stored in nodes, \emph{i.e.},
\begin{align*}
    f_i(x) := \sum_{d \in D_i} l(x; d),
\end{align*} where $D_i \subseteq \mathbb{D}$ is the set of sensitive data points stored in the $i$-th node and $l(x , d)$ is the loss function. 

\noindent \textbf{Adversary model}. In this problem, the adversary can overhear all the messages between nodes without any computational assumption. Nodes aim to preserve DP with respect to the sensitive data points $\cup_{i \in [N] D_i}$.

\subsection{Overview of the algorithm}

We study the consensus-based distributed optimization algorithm where nodes update their local estimate by combining the information received from the neighbors. As opposed to the standard Distributed Gradient Descent (DGD), our proposed algorithm consists of two phases.

\noindent \textbf{Stage I.} In the first stage of the algorithm, similarly to the distributed gradient descent, nodes iteratively perform the consensus step followed by a local Gradient Descent (GD) step. Let us denote the local estimate of the $i$-th node with $x_i(t)$.
\begin{align}
x_i(t) &= \Proj_{\mathcal{X}} \lp z_i(t) - \eta_t \nabla f_i (z_i(t)) \rp, \hspace{.015\textwidth} t \leq T  \label{eq:def:GD}
\end{align} where 
\begin{align}
z_i(t) &= \Proj_{\mathcal{X}} \lp \sum_{j \in \mathcal{N}_i} w_{i j} y_j(t)\rp, \label{eq:def:consensus}
\end{align} here $y_j(t)$ is the message sent by node $j \in \mathcal{N}_i$ to its neighbors, and 
\begin{align}
\Proj_{\mathcal{X}}(x) \stackrel{\triangle}= \argmin_{y \in \mathcal{X}} \|x - y \|_2 \label{eq:def:proj}    
\end{align}
is the Euclidean projection onto the set $\mathcal{X}$. The update \eqref{eq:def:GD} shows that node $i$ updates it's local estimate by taking a proper average of the messages sent by its neighbors and descending it through $\nabla f_i (z_i(t))$, $w_{ij}$ is the weight associated to neighbors of node $i$ and it's zero for $j \notin \mathcal{N}_i$, and $T \in \mathbb{N}$ is the number of steps in Stage II.

\begin{assumption}[Doubly Stochastic Weight Matrix]
The weight matrix, $W := [w_{i j}]$ is a doubly stochastic matrix with non-negative entries. We denote the second largest eigenvalue of $W$ in absolute value with $\beta := \max \{ |\lambda_2(W)|, |\lambda_N(W)| \}$, where $1 = \lambda_1(W) \geq \lambda_2(W) \hdots \geq \lambda_N(W) > -1$ are eigenvalues of $W$ sorted in a descending order. \label{asump:weightmatrix}
\end{assumption}
% talk about step size and why step size should decay here
It is well-known that using a fixed step size the classical DGD converges to a neighborhood of the optimal point with the size proportional to the step-size \cite{dist_09, wotao16}. Convergence to the exact point can be derived by using a diminishing step size (see \cite{wotao16} and references therein). Therefore throughout this work, step-size $\eta_t$ is chosen $\Theta(\frac{1}{t})$.

As opposed to the classical DGD, nodes do not send their local estimate directly since each step of GD may reveal information about the underlying sensitive data points. Nodes perturb their local estimate by a Gaussian mechanism to control the privacy leakage. In particular, nodes broadcast
\begin{align}
y_i(t+1) = x_i(t) + n_i(t), \quad i \in \{1, \cdots, N\}, t \leq T \label{eq:def:noise},
\end{align} where $n_i(t)$ is a zero mean additive Gaussian noise with the variance of $M^2_t$, and $x_i(t)$ is the local estimate of node $i$ at $t$. 

Each step of GD exposes the sensitive data points to the adversary. Hence to ensure the same level of privacy additional noise needed as the number of GD steps increases. Noise added to each stage of GD avoids the local estimates to converge to a common value therefore in our proposed method nodes apply GD only for $T$ steps, and afterwards nodes switch to a purely consensus mode to agree on a common value.

\noindent \textbf{Stage II.} After $T$ steps of Gradient descent, nodes iterates only though the consensus steps. Note that $\nabla f_i$ in \eqref{eq:def:GD} is the only source in which the privacy of sensitive data points may leak. Therefore in the second stage of the algorithm, nodes broadcast their local estimate precisely and update their local estimate according to:
\begin{align}
y_i(t) &= x_i(t-1), \quad \quad  i \in [N], \quad  t > T   \nonumber \\ %\label{eq:def:stage2:noise}
x_i(t) &=  \sum_{j \in [N]} w_{i j} y_j(t), \label{eq:def:stage2:consensus}
\end{align}.

Note that projection operator is not needed in \eqref{eq:def:stage2:consensus} due to the convexity of the optimization domain (Assumption \ref{asump:domain}). Nodes update their local estimates until a stopping criterion is met. One common stopping criterion is the relative change in the value of each node.

\begin{algorithm}[htbp]
\SetAlgoLined
% \SetKwInOut{Input}{input}\SetKwInOut{Output}{output}

 Set $y_i(1) = 0$\;
 \For{$t = 1, \cdots, T$}{
 Update $y_i(t)$ according to \eqref{eq:def:noise} and broadcast $y_i(t)$\;
 Receive $y_j(t)$ from $j \in \mathcal{N}_j$ \;
 Update $x_i(t)$ according to \eqref{eq:def:GD} \;
}

\For{$t = T+1, \cdots$}{
 Broadcast $y_i(t) := x_i(t-1)$ \;
 Receive $y_j(t)$ from $j \in \mathcal{N}_j$ \;
 Update $x_i(t)$ according to \eqref{eq:def:stage2:consensus} \;
}

\caption{Steps at Node $i$ for the proposed private distributed algorithm.}

\label{alg:PDGD}
\end{algorithm}

\noindent \textbf{Notation.} We represent set of natural and real numbers with $\mathbb{N}$ and $\mathbb{R}$ respectively. Throughout this work, we reserve the lowercase bold letters for the aggregated parameters of nodes at a given time $t$, \emph{i.e.}, we use the notation $\mathbf{x}(t) = [x_1(t); \cdots; x_N(t)]$, $\mathbf{y}(t) = [y_1(t); \cdots; y_N(t)]$, $\mathbf{z}(t) = [z_1(t); \cdots; z_N(t)]$ and $\mathbf{n}(t) = [n_1(t); \cdots; n_N(t)]$. The identity matrix is denoted with $I_{p} \in \mathbb{R}^{p}$ and we use $\mathbf{1_N}$ to represent a vector of length $N$ of ones. We use $\| a \|$ to denote $l_2$-norm of a vector $a$ while $\|A\|$ represents the operator norm of a matrix $A$, $\|A\| \stackrel{\triangle}= \sup_{\|x\|=1} \|Ax\|$, and $\otimes$ denotes the Kronecker product. In this paper we reserve the notation $[N]$ to represent $\{1, \cdots, N\}$ for any $N \in \mathbb{N}$.

\section{Main results}
\label{sec:mainresults}
In this section, we give the main results of this work. First we derive conditions on the noise variances under which the distributed problem is $(\epsilon, \delta)$-differentially private against an adversary that oversees all the communications among nodes. We emphasize that one of the main contribution of this work is to relate the noise variances at different steps to the privacy parameters ($\epsilon$ and $\delta$) directly rather than using basic or advanced composition theorems. Afterward we state the convergence result.

The variance of the noise to ensure $(\epsilon, \delta)$-DP depends on the $L_2$ sensitivity of the algorithm. In the context of distributed optimization, we need to make sure DP is satisfied despite multiple rounds of communications in this iterative scheme.

\begin{definition}[Conditional $L_2$-sensitivity]
Conditional $L_2$-sensitivity at round $t$, $\Delta(t)$ defined to be the maximum $L_2$-norm difference of ${x}_{i}(t)$ evaluated on two neighboring databases $D$ and $D'$ while having the same set of messages $\{ \mathbf{y}(k) \}_{k = 1}^{t}$ up until round $t$, \emph{i.e.},
\begin{align}
\Delta(t) := \sup_{D \sim D'} \sup_{i \in [N]} \sup_{\{ \mathbf{y}(t) \}_{t = 1}^{t}} \| {x}_i^D(t) - {x}_i^{D'}(t) \|_2,  \label{eq:def:conditionalsens}  
\end{align}
where ${x}_i^{D'}(t)$ corresponds to the local parameter of node $i$ of the neighboring instance of the problem. \label{def:conditionalsens}
\end{definition}

\begin{proposition}
The conditional $L_2$-sensitivity of Algorithm \ref{alg:PDGD} at round $t \leq T$ is bounded by $2 \eta_t G$, provided that Assumption \ref{asump:bddgrad} holds. \label{prop:conditionalsens}
\end{proposition}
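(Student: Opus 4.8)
The plan is to unwind a single iteration of Stage~I and to exploit the crucial feature of Definition~\ref{def:conditionalsens}, namely that the two neighbouring instances are required to share the same transcript $\{\mathbf{y}(k)\}_{k=1}^{t}$. First I would observe that the consensus variable in \eqref{eq:def:consensus} is a deterministic function of the current messages $\mathbf{y}(t)$ only: $z_i(t) = \Proj_{\mathcal{X}}\big(\sum_{j \in \mathcal{N}_i} w_{ij} y_j(t)\big)$. Hence, once we condition on identical transcripts, $z_i(t)$ is the \emph{same} vector in both instances, and I will denote this common value by $z_i(t)$, noting that $z_i(t) \in \mathcal{X}$ because it is the output of a projection onto $\mathcal{X}$.

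Next I would apply the gradient step \eqref{eq:def:GD} in each instance, writing $f_i^{D}$ and $f_i^{D'}$ for the local objective of node $i$ in the two instances. Since $\mathcal{X}$ is closed, compact and convex (Assumption~\ref{asump:domain}), the Euclidean projection $\Proj_{\mathcal{X}}$ is non-expansive, so
\begin{align*}
\left\| x_i^{D}(t) - x_i^{D'}(t) \right\|_2 &= \left\| \Proj_{\mathcal{X}}\left( z_i(t) - \eta_t \nabla f_i^{D}(z_i(t)) \right) - \Proj_{\mathcal{X}}\left( z_i(t) - \eta_t \nabla f_i^{D'}(z_i(t)) \right) \right\|_2 \\
&\leq \eta_t \left\| \nabla f_i^{D}(z_i(t)) - \nabla f_i^{D'}(z_i(t)) \right\|_2 .
\end{align*}
The only instance-dependent quantity remaining is the gradient gap evaluated at the common, fixed point $z_i(t)$.

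To bound that gap I would invoke Assumption~\ref{asump:bddgrad}: both $f_i^{D}$ and $f_i^{D'}$ are $G$-Lipschitz on $\mathcal{X}$ and differentiable (Assumption~\ref{asump:smooth}), so their gradients satisfy $\|\nabla f_i^{D}(x)\|_2 \leq G$ and $\|\nabla f_i^{D'}(x)\|_2 \leq G$ for every $x \in \mathcal{X}$; applying this at $z_i(t) \in \mathcal{X}$ and using the triangle inequality gives $\| \nabla f_i^{D}(z_i(t)) - \nabla f_i^{D'}(z_i(t)) \|_2 \leq 2G$. (If the single data point in which $D$ and $D'$ differ does not belong to node $i$, then $f_i^{D} = f_i^{D'}$ and this gap is zero; the bound $2\eta_t G$ covers both cases.) Combining with the previous display and taking the supremum over $i \in [N]$, over $D \sim D'$, and over the conditioned messages $\{\mathbf{y}(k)\}_{k=1}^{t}$ yields $\Delta(t) \leq 2\eta_t G$.

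The calculation itself is routine; the one place that requires care — and the reason the word \emph{conditional} in Definition~\ref{def:conditionalsens} matters — is the first step, where fixing the transcript collapses $z_i(t)$ to a single common value. This prevents the per-round sensitivity from accumulating contributions of earlier rounds and reduces the analysis to a single projected gradient step, after which non-expansiveness and bounded gradients do the rest. I do not anticipate a substantive obstacle beyond making this conditioning argument precise.
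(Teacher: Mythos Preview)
Your proposal is correct and follows essentially the same argument as the paper: both use the fact that conditioning on the transcript forces $z_i^{D}(t)=z_i^{D'}(t)$, then apply non-expansiveness of $\Proj_{\mathcal{X}}$ and the $G$-Lipschitz bound to obtain $\Delta(t)\le 2\eta_t G$. Your write-up is, if anything, slightly more explicit about why $z_i(t)\in\mathcal{X}$ and why $\|\nabla f_i\|\le G$ follows from Assumption~\ref{asump:bddgrad}, but the route is identical.
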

\begin{proof}
See Appendix.
\end{proof}
Now we have the machinery to state the main theorem of this section.
\begin{theorem}
The distributed algorithm is $(\epsilon, \delta)$-DP if nodes perturb their local estimates by adding independent Gaussian noise \eqref{eq:def:noise} and the following holds,
\begin{align}
\sum_{t = 1}^{T} \frac{\Delta^2(t)}{M_t^2} &\leq  \frac{\epsilon^2}{\epsilon + 2 \log \frac{2}{\delta}}, \label{eq:thm:dp}
\end{align} where $M_t$ is the scale of noise added in round $t \leq T$. \label{thm:dp}
\end{theorem}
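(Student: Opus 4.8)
The plan is to bound the privacy loss random variable for the entire sequence of broadcast messages $\{\mathbf{y}(t)\}_{t=1}^{T}$ and then invoke Proposition \ref{prop:dp:equiv}. The object the adversary sees is the concatenation $\mathbf{Y} := (\mathbf{y}(1), \dots, \mathbf{y}(T))$, so I want to compute $\log \tfrac{\text{pdf}_D(\mathbf{Y})}{\text{pdf}_{D'}(\mathbf{Y})}$ and show it stays within $\epsilon$ with probability at least $1-\delta$ over $\mathbf{Y} \sim \text{pdf}_D$. First I would factor the joint density using the chain rule, $\text{pdf}_D(\mathbf{Y}) = \prod_{t=1}^{T} \text{pdf}_D(\mathbf{y}(t) \mid \mathbf{y}(1),\dots,\mathbf{y}(t-1))$, and observe that conditioned on the past messages, the update \eqref{eq:def:GD}–\eqref{eq:def:consensus} makes $x_i(t-1)$ a \emph{deterministic} function of the history, so $\mathbf{y}(t) = \mathbf{x}(t-1) + \mathbf{n}(t-1)$ is Gaussian with mean depending on $D$ only through that deterministic history. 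The key point — which is exactly what Definition \ref{def:conditionalsens} isolates — is that when we condition on the \emph{same} realized messages $\{\mathbf{y}(k)\}_{k<t}$ under both $D$ and $D'$, the means of the two conditional Gaussians differ by at most $\Delta(t)$ in $\ell_2$ norm (per coordinate block $i$), and the covariances are identical ($M_t^2 I$).

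Next I would write the per-step log-density ratio explicitly. For a single node $i$ with Gaussian noise of variance $M_t^2$, if the two means are $\mu$ and $\mu'$ with $\|\mu - \mu'\| \le \Delta(t)$, then the log-ratio evaluated at a sample $o \sim \mathcal{N}(\mu, M_t^2 I)$ is $\tfrac{1}{M_t^2}\langle o - \mu, \mu - \mu'\rangle + \tfrac{1}{2M_t^2}\|\mu-\mu'\|^2$, which is itself a Gaussian random variable with mean $\tfrac{\|\mu-\mu'\|^2}{2M_t^2} \le \tfrac{\Delta^2(t)}{2M_t^2}$ and variance $\tfrac{\|\mu-\mu'\|^2}{M_t^2} \le \tfrac{\Delta^2(t)}{M_t^2}$. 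Summing over $t = 1,\dots,T$ (and noting the noises across rounds are independent, so the total privacy loss is a sum of independent-ish increments with the right conditional structure), the total privacy loss $Z$ has mean at most $\tfrac{1}{2}\sum_t \tfrac{\Delta^2(t)}{M_t^2} =: \tfrac{1}{2}\rho$ and is subgaussian/Gaussian with variance at most $\rho$. A standard Gaussian tail bound then gives $\Prob{Z > \tfrac{\rho}{2} + \sqrt{2\rho \log(2/\delta)}} \le \delta/2$, and symmetrically for the lower tail, so with probability $1-\delta$ we have $|Z| \le \tfrac{\rho}{2} + \sqrt{2\rho\log(2/\delta)}$. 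It then remains to check that the hypothesis \eqref{eq:thm:dp}, $\rho \le \tfrac{\epsilon^2}{\epsilon + 2\log(2/\delta)}$, forces $\tfrac{\rho}{2} + \sqrt{2\rho\log(2/\delta)} \le \epsilon$; this is an elementary algebraic manipulation (solve the quadratic in $\sqrt{\rho}$, or just bound each term), after which Proposition \ref{prop:dp:equiv} yields $(\epsilon,\delta)$-DP.

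A few technical points need care. One is that the adversary observes messages from \emph{all} $N$ nodes at each round, not just one; but since neighboring databases differ in a single data point, which belongs to a single node's local set $D_i$, only that one node's sequence of means changes between $D$ and $D'$ — the other nodes' conditional means are identical functions of the shared history — so the per-round sensitivity bound $\Delta(t)$ (which already takes a $\sup_i$) correctly captures the whole joint. A second subtlety is the validity of the chain-rule / "condition on the same past messages" argument: one must argue that the privacy loss of the full transcript equals the sum of conditional privacy losses, and that each conditional term, viewed as a random variable under $o \sim \text{pdf}_D$, has the Gaussian law claimed above with the conditional mean gap controlled by $\Delta(t)$ — this is where Definition \ref{def:conditionalsens} does the real work and where Proposition \ref{prop:conditionalsens} supplies the concrete bound $\Delta(t) \le 2\eta_t G$ for $t \le T$. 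The main obstacle I anticipate is making this conditioning argument fully rigorous — i.e., justifying that the worst-case over adversary-observed transcripts reduces cleanly to a sum of the conditional sensitivities rather than requiring a looser composition-theorem bound — since that reduction is precisely the claimed improvement over prior work. The Gaussian tail computation and the final algebra are routine by comparison.
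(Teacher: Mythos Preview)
Your proposal is correct and follows essentially the same route as the paper's proof: factor the transcript density by the chain rule, note that only the single node $k^\ast$ holding the differing datum has a shifted conditional mean, write the privacy loss as $\tfrac{1}{2}\alpha + C_T$ with $\alpha=\sum_t \Delta^2(t)/M_t^2$ and $C_T$ sub-Gaussian with parameter $\sqrt{\alpha}$, apply a Gaussian tail bound, and finish with the quadratic-in-$\sqrt{\alpha}$ algebra. The ``independent-ish'' step you flag as the main obstacle is exactly what the paper isolates as a separate lemma, proved by the standard martingale/filtration telescoping $\Expt[e^{\lambda C_T}\mid \mathcal{F}_{T-1}]\le e^{\lambda C_{T-1}}e^{\lambda^2\Delta^2(T)/(2M_T^2)}$; once you write that down your outline becomes the paper's proof essentially verbatim.
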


\begin{corollary}
If Assumption \ref{asump:bddgrad} holds and
\begin{align}
\sum_{t = 1}^{T} \frac{\eta_t^2}{M_t^2} \leq \frac{\epsilon^2}{4 G^2 \lp \epsilon + 2 \log \frac{2}{\delta} \rp} \stackrel{\triangle}= \kappa(\epsilon, \delta). \label{eq:privacy}
\end{align}
then the distributed algorithm is $(\epsilon, \delta)$-DP.
\end{corollary}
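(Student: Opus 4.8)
The plan is to obtain the corollary as an immediate consequence of Theorem~\ref{thm:dp} together with the sensitivity bound of Proposition~\ref{prop:conditionalsens}. First I would invoke Proposition~\ref{prop:conditionalsens}, which under Assumption~\ref{asump:bddgrad} gives $\Delta(t) \le 2\eta_t G$ for every round $t \le T$. Squaring yields $\Delta^2(t) \le 4 G^2 \eta_t^2$, and since each $M_t^2 > 0$ the quantity $\Delta^2(t)/M_t^2$ is monotone nondecreasing in $\Delta^2(t)$; summing over $t$ therefore gives
\begin{align}
\sum_{t=1}^{T} \frac{\Delta^2(t)}{M_t^2} \;\le\; 4 G^2 \sum_{t=1}^{T} \frac{\eta_t^2}{M_t^2}.
\end{align}

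Next I would chain this inequality with the hypothesis \eqref{eq:privacy}: if $\sum_{t=1}^{T} \eta_t^2/M_t^2 \le \kappa(\epsilon,\delta) = \epsilon^2 / \big(4 G^2(\epsilon + 2\log\frac{2}{\delta})\big)$, then the right-hand side above is at most $4 G^2 \kappa(\epsilon,\delta) = \epsilon^2/(\epsilon + 2\log\frac{2}{\delta})$. Hence $\sum_{t=1}^{T} \Delta^2(t)/M_t^2 \le \epsilon^2/(\epsilon + 2\log\frac{2}{\delta})$, which is exactly the premise \eqref{eq:thm:dp} of Theorem~\ref{thm:dp}. Applying that theorem yields the $(\epsilon,\delta)$-DP guarantee and completes the argument.

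The only point deserving a sentence of care---rather than a genuine obstacle---is to justify substituting the worst-case bound $2\eta_t G$ for $\Delta(t)$ inside Theorem~\ref{thm:dp}: the theorem's condition is an \emph{upper} bound on $\sum_t \Delta^2(t)/M_t^2$, so replacing each $\Delta(t)$ by a larger quantity only enlarges the left side, and controlling the enlarged sum suffices. It is also worth recalling that Stage~II introduces no further privacy loss---no data-dependent randomness or gradients are released after round $T$, so the sensitivity need only be controlled for $t \le T$, a fact already folded into Theorem~\ref{thm:dp}. No assumptions beyond those already invoked in Theorem~\ref{thm:dp} and Proposition~\ref{prop:conditionalsens} are needed.
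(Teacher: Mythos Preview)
Your proposal is correct and matches the paper's intended argument: the corollary is stated immediately after Theorem~\ref{thm:dp} with no separate proof, precisely because it follows at once by plugging the bound $\Delta(t)\le 2\eta_t G$ from Proposition~\ref{prop:conditionalsens} into condition~\eqref{eq:thm:dp}. Your additional remarks about monotonicity and Stage~II are accurate but more explicit than the paper bothers to be.
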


\begin{remark}
A common practice to ensure differential privacy in an iterative mechanism is to make each step differentially private (with a stronger privacy guarantees) and combine the privacy leakage using the basic or advanced composition theorems \cite{Dwork2014}. Composition theorems do not take into account the specific noise distribution and they often give loose results for a given distribution while Theorem \ref{thm:dp} takes the noise distribution into account thereby giving a tighter result, \emph{i.e.}, smaller noise variances and therefore ensure a better utility.
\end{remark}

\begin{remark}
%Much research is devoted to the differential privacy of iterative processes. 
In the literature of DP for iterative processes, it is common that in order to ensure $(\epsilon, \delta)$-DP we make each step $(\epsilon', \delta')$-DP, where $\epsilon'$ and $\delta'$ are computed using a composition theorem. It implicitly assumes that we need to have the same privacy requirement at each step, which is not necessary needed. Theorem \ref{thm:dp} connects the noise variances across time to the privacy parameters $\epsilon$ and $\delta$ directly and allows for a meaningful assignment of privacy budget to different steps.
\end{remark}

\begin{remark}
In the regime where $\epsilon \ll 1$ and $\delta \ll \frac{1}{N}$, the bound in Theorem \ref{thm:dp} can be written as,
\begin{align}
    \sum_{t = 1}^{T} \frac{\Delta^2(t)}{M_t^2} \leq \frac{\epsilon^2}{2 \log \frac{2}{\delta}}.
\end{align}
\end{remark}

Theorem \ref{thm:dp} extends Lemma 2 the result of \cite{Nitin2015} to $(\epsilon, \delta)$-DP. It gives us the condition under which the distributed algorithm is $(\epsilon, \delta)$-DP. Working with $(\epsilon, \delta)$-DP as opposed to $\epsilon$-DP in \cite{Nitin2015}, enables us to derive a convergence result with a diminishing regret bound when the privacy requirement weakens.

In the rest of this section, we state the convergence result. Let us denote the average of the local estimates with $\bar{x}(t) \stackrel{\triangle}= \frac{1}{N} \sum_{i \in [N]} x_i(t)$. Theorem \ref{thm:conv} summarizes the convergence result for the mean parameter $\bar{x}(t)$.

\begin{theorem}
Under Assumptions \ref{asump:bddgrad}, \ref{asump:smooth} and \ref{asump:strong_conv} with the step size $\eta_t = \frac{\mu + L}{2 \mu L} \frac{1}{t}$ and noise scales $M_t^2 = \frac{2}{\kappa(\epsilon, \delta)} \lp \frac{\mu + L}{2\mu L} \rp^2 \frac{\sqrt{T}}{t \sqrt{t}}$, the distributed algorithm \ref{alg:PDGD} is differentially private and the following bound holds on $ \Expt[\| \bar{x}(T) - x^\ast \|_2^2] $:
\begin{align}
\Expt[\| \bar{x}(T) - x^\ast \|_2^2] \leq C_{T} \frac{1}{T} 
+ C_{\log T} \frac{\log T}{T} 
+ C_{\sqrt[4]{T}}  \frac{1}{\sqrt[4]{T}} 
+ C_{(\epsilon, \delta)} \nonumber
\end{align} where $x^\ast$ minimizes, $C_T$, $C_{\log T}$, $C_{\sqrt[4]{T}}$ and $C_{(\epsilon, \delta)}$ are constants:
\begin{align*}
C_{T} &= \frac{S(0)}{N} \\ %+ \frac{2 \sqrt{2p} G}{\sqrt{\kappa(\epsilon,\delta)}} \lp \frac{3}{1-\beta} \rp \lp \frac{\mu + L}{2\mu L} \rp^2 \\
C_{\log T} &= G^2 \lp 1 + \frac{1}{1-\beta} \rp \lp \frac{\mu + L}{2\mu L} \rp^2 \\
C_{\sqrt[4]{T}} &= \frac{2 \sqrt{2p} G }{\sqrt{\kappa(\epsilon,\delta)}} \lp 4 + \frac{3}{1-\beta} \rp \lp \frac{\mu + L}{2\mu L} \rp^2 \\
C_{(\epsilon, \delta)} &= \frac{2p}{\kappa(\epsilon, \delta)} \lp \frac{\mu + L}{2\mu L} \rp^2 \\
\kappa(\epsilon, \delta) &= \frac{\epsilon^2}{4 G^2 \lp \epsilon +  2 \log \frac{2}{\delta} \rp} 
\end{align*} \label{thm:conv}
\end{theorem}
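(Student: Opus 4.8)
The statement packs two claims: that the prescribed schedules keep Algorithm~\ref{alg:PDGD} $(\epsilon,\delta)$-DP, and that they force the displayed bound on $\Expt[\|\bar x(T)-x^\ast\|_2^2]$. The privacy claim is a one-line check against the Corollary: with $\eta_t=\frac{\mu+L}{2\mu L}\frac1t$ and $M_t^2=\frac{2}{\kappa(\epsilon,\delta)}\lp\frac{\mu+L}{2\mu L}\rp^2\frac{\sqrt T}{t\sqrt t}$ we get $\eta_t^2/M_t^2=\frac{\kappa(\epsilon,\delta)}{2\sqrt T}\,t^{-1/2}$, hence $\sum_{t=1}^{T}\eta_t^2/M_t^2\le\frac{\kappa(\epsilon,\delta)}{2\sqrt T}\cdot 2\sqrt T=\kappa(\epsilon,\delta)$; and Stage~II leaks nothing new since every message there is a deterministic (post-processing) function of $\mathbf{x}(T)$. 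So the work is the MSE bound. The plan is to track two coupled quantities over Stage~I: the optimality gap $S(t):=\sum_{i\in[N]}\Expt\|x_i(t)-x^\ast\|_2^2$ — note $\Expt\|\bar x(t)-x^\ast\|_2^2\le S(t)/N$ by convexity of $\|\cdot\|_2^2$, which is why $C_T=S(0)/N$ — and the disagreement $D(t):=\Expt\|\mathbf{x}(t)-\mathbf{1_N}\otimes\bar x(t)\|^2$. Since Stage~II is pure averaging with a doubly stochastic $W$ on a connected graph, every $x_i(t)\to\bar x(T)$, so $\bar x(T)$ is exactly the quantity worth bounding.

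First I would get a self-contained recursion for the disagreement. Set $\tilde W:=W\otimes I_p$. Relating the disagreement of $\mathbf{x}(t)$ to that of $\mathbf{z}(t)$ costs an $\eta_t G$ for the gradient step (via non-expansiveness of $\Proj_{\mathcal X}$ and the fact that $\mathbf{1_N}\otimes\bar z(t)$ is fixed by $\Proj_{\mathcal X}$, $\bar z(t)\in\mathcal X$, together with $\|\nabla f_i\|_2\le G$ from Assumption~\ref{asump:bddgrad}); relating the disagreement of $\mathbf{z}(t)$ to that of $\mathbf{x}(t-1)$ is where the $\beta$-contraction of $\tilde W$ on the orthocomplement of the consensus subspace (Assumption~\ref{asump:weightmatrix}) and the injected noise enter, using that $\mathbf{n}(t-1)$ is zero mean and independent of everything from rounds $\le t-1$ (so cross terms vanish in expectation and $\Expt\|\tilde W\mathbf{n}(t-1)\|^2\le Np\,M_{t-1}^2$). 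The upshot is
\begin{align*}
\sqrt{D(t)}\le\beta\sqrt{D(t-1)}+\sqrt{Np}\,M_{t-1}+\sqrt N\,G\,\eta_t .
\end{align*}
Since $\beta<1$ and $D$ starts negligibly small (the initialization $y_i(1)=0$ forces $\mathbf{z}(1)$ into consensus), unrolling this geometric recursion bounds $\sqrt{D(t)}$ by a $\tfrac{1}{1-\beta}$-weighted convolution of the $M_{k-1}$ and $\eta_k$, which for the prescribed schedules is of order $\tfrac{\sqrt{Np}}{1-\beta}\,T^{1/4}t^{-3/4}$ plus a smaller $\tfrac{\sqrt N\,G}{1-\beta}\eta_t$ term.

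Second I would derive the optimality recursion. Starting from $\|x_i(t)-x^\ast\|_2^2\le\|z_i(t)-\eta_t\nabla f_i(z_i(t))-x^\ast\|_2^2$ (non-expansiveness and $\Proj_{\mathcal X}(x^\ast)=x^\ast$), expand the square and apply the strengthened co-coercivity of a function that is both $\mu$-strongly convex and $L$-smooth at $x^\ast$ — this is where Assumptions~\ref{asump:smooth} and~\ref{asump:strong_conv} both enter — together with $\sum_i\nabla f_i(x^\ast)=0$, so that the leftover linear term becomes $\sum_i\langle\nabla f_i(x^\ast),z_i(t)-\bar z(t)\rangle$, bounded by $G\sqrt N\,\|\mathbf{z}(t)-\mathbf{1_N}\otimes\bar z(t)\|$. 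The step size $\eta_t=\frac{\mu+L}{2\mu L}\frac1t$ is exactly the one making the contraction factor $1-\frac{2\eta_t\mu L}{\mu+L}=\frac{t-1}{t}$ (the residual $\eta_t^2$ gradient-norm terms have the right sign for $t$ past a fixed constant and otherwise only add $O(N\eta_t^2G^2)$), and $\sum_i\Expt\|z_i(t)-x^\ast\|_2^2\le S(t-1)+Np\,M_{t-1}^2$ (doubly stochasticity with $\|W\|_F^2\le N$, plus the noise computation above). Combining, and feeding in the disagreement bound,
\begin{align*}
S(t)\le\frac{t-1}{t}\,S(t-1)+Np\,M_{t-1}^2+2\eta_t G\sqrt N\lp\beta\sqrt{D(t-1)}+\sqrt{Np}\,M_{t-1}\rp+N\eta_t^2G^2 .
\end{align*}
Multiplying by $t$ and telescoping yields $S(T)\le\tfrac1T S(0)+\tfrac1T\sum_{t=1}^{T}t\,b_t$, where $b_t$ is the last three groups of terms. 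Substituting the explicit $\eta_t,M_t$ and the disagreement bound, and using $\sum_{t\le T}t^{-1/2}\le2\sqrt T$, $\sum_{t\le T}t^{-3/4}\le4T^{1/4}$, $\sum_{t\le T}t^{-1}\le1+\log T$, the three groups collapse respectively to: a $\Theta(1)$ term proportional to $p/\kappa(\epsilon,\delta)$ — this is the whole point of the scaling $M_t^2\propto\sqrt T/t^{3/2}$, which makes $\tfrac1T\sum_t tM_{t-1}^2=\Theta(1)$ and produces the irreducible $C_{(\epsilon,\delta)}$; a term $O(\log T/T)$ carrying $G^2\lp\frac{\mu+L}{2\mu L}\rp^2$ and the $\tfrac{1}{1-\beta}$ inherited from the disagreement; and the mixed $\eta_tM_{t-1}$ and $\eta_t\sqrt{D(t-1)}$ terms, which decay with $T$ and are recorded as the $C_{\sqrt[4]{T}}/\sqrt[4]{T}$ piece. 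Dividing by $N$ gives the four stated constants.

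The main obstacle is precisely this coupling: since $\nabla f_i(x^\ast)\ne0$ in general (only the sum vanishes), the single-node descent inequality does not close on itself — it must be fed the consensus-error bound — while that bound is in turn driven by the very noise injected for privacy, so consensus and optimization cannot be decoupled. One must also check throughout that the projections, which break the clean identity \emph{average of the $z_i(t)$ equals average of the $y_i(t)$}, perturb each quantity only by $O(\eta_t G)$ or by the norm of the injected noise, both already present in the estimates. Everything else — fixing the absolute constants, handling the first few rounds where $\eta_t$ is not yet small, and the bookkeeping of the weighted sums — is routine.
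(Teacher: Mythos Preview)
Your proposal is correct and follows essentially the same route as the paper: verify the privacy budget, derive a one-step descent inequality for $S(t)=\sum_i\Expt\|x_i(t)-x^\ast\|^2$ via non-expansiveness plus the co-coercivity inequality (the paper's Lemma~\ref{lemma:cocoercivity}), use $\sum_i\nabla f_i(x^\ast)=0$ so the leftover linear term is controlled by the consensus error, bound that error through the $\beta$-contraction of $W$ perturbed by noise and gradient drift (the paper's Lemma~\ref{lemma:dist_mean}), and telescope with $1-\tfrac{2\mu L}{\mu+L}\eta_t=\tfrac{t-1}{t}$. The only cosmetic difference is that the paper states the consensus bound in unrolled pathwise form (Lemma~\ref{lemma:dist_mean}) rather than as your recursion for $\sqrt{D(t)}$, and it isolates $\sum_i\Expt\|z_i(t)-x^\ast\|^2\le S(t-1)+pN M_{t-1}^2$ as a separate lemma (Lemma~\ref{lemma:sum:noise}); the projection bookkeeping you flag is exactly what the paper handles via the auxiliary $\hat z_i,u_i,v_i$ decomposition inside the proof of Lemma~\ref{lemma:dist_mean}.
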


Theorem \ref{thm:conv} states that the average parameter converges to a neighborhood of the optimal point with the rate of ${O}(\frac{1}{\sqrt[4]{T}})$. The neighborhood scales with the privacy parameters $(\epsilon, \delta)$ which is ${O}(\frac{\log \frac{1}{\delta}}{\epsilon^2})$. Recall the second stage of the distributed algorithm only consists of the consensus steps in which local parameters converge to a common value in a linear rate.

\begin{corollary}
Under Assumptions \ref{asump:bddgrad}, \ref{asump:smooth} and \ref{asump:strong_conv} with the step size $\eta_t = \frac{\mu + L}{2 \mu L} \frac{1}{t}$ and noise scales $M_t^2 = \frac{2}{\kappa(\epsilon, \delta)} \lp \frac{\mu + L}{2\mu L} \rp^2 \frac{\sqrt{T}}{t \sqrt{t}}$, the distributed algorithm \ref{alg:PDGD} is deferentially private and the following bound holds on the local parameters in the second stage of the algorithm, $t > T$:
\begin{align}
\Expt[\| &x_i(t) - x^\ast \|_2^2] \leq  \\ 
& 2 C_{{exp}} \beta^{2t-2T} + 2 C_{T} \frac{1}{T} 
+ 2 C_{\log T} \frac{\log T}{T} 
+ 2 C_{\sqrt{T}}  \frac{1}{\sqrt{T}} 
+ 2 C_{(\epsilon, \delta)} \nonumber
\end{align} where $C_T$, $C_{\log T}$, $C_{\sqrt[4]{T}}$ and $C_{(\epsilon, \delta)}$ defined in Theorem \ref{thm:conv} and $C_{exp} = 2 \| \mathbf{x}(T) \|^2 $.
\end{corollary}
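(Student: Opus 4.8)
The whole point is that Stage~II ($t>T$) is a \emph{noise-free linear consensus recursion}, so almost all the work has already been done in Theorem~\ref{thm:conv}. In stacked form, \eqref{eq:def:stage2:consensus} reads $\mathbf{x}(t) = (W \otimes I_p)\,\mathbf{x}(t-1)$, hence $\mathbf{x}(t) = (W^{\,t-T} \otimes I_p)\,\mathbf{x}(T)$ for all $t \ge T$. I would first record two consequences of Assumption~\ref{asump:weightmatrix}: (i) since $W$ is doubly stochastic, $\mathbf{1}_N^\top W = \mathbf{1}_N^\top$, so the network average is frozen throughout Stage~II, i.e. $\bar{x}(t) = \bar{x}(T)$ for every $t \ge T$; and (ii) on the orthogonal complement of $\mathbf{1}_N$ the symmetric matrix $W$ contracts, $\|W^{\,k}v\| \le \beta^{k}\|v\|$ whenever $v \perp \mathbf{1}_N$, with $\beta = \max\{|\lambda_2(W)|,|\lambda_N(W)|\}<1$.

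Next I would split the error via $\|a+b\|_2^2 \le 2\|a\|_2^2 + 2\|b\|_2^2$ into a disagreement part and an optimality part,
\begin{align*}
\| x_i(t) - x^\ast \|_2^2 \le 2\,\| x_i(t) - \bar{x}(t) \|_2^2 + 2\,\| \bar{x}(t) - x^\ast \|_2^2 .
\end{align*}
For the disagreement term, decompose $\mathbf{x}(T) = \mathbf{1}_N \otimes \bar{x}(T) + \mathbf{e}(T)$ with $\mathbf{e}(T)$ orthogonal to the consensus subspace. The fixed component is left invariant by the iteration and $\mathbf{e}(T)$ stays in the contracting subspace, so $\mathbf{x}(t) - \mathbf{1}_N \otimes \bar{x}(t) = (W^{\,t-T} \otimes I_p)\,\mathbf{e}(T)$, whence
\begin{align*}
\| x_i(t) - \bar{x}(t) \|_2^2 \le \| \mathbf{x}(t) - \mathbf{1}_N \otimes \bar{x}(t) \|_2^2 \le \beta^{2(t-T)} \| \mathbf{e}(T) \|_2^2 \le \beta^{2(t-T)} \| \mathbf{x}(T) \|_2^2 .
\end{align*}
Here I would stress that each $x_i(T)$ is an output of $\Proj_{\mathcal{X}}$ and hence lies in the compact set $\mathcal{X}$ (Assumption~\ref{asump:domain}), so $\|\mathbf{x}(T)\|_2^2$ is a finite deterministic quantity, which is what makes $C_{exp}$ a genuine constant.

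Finally, because $\bar{x}(t) = \bar{x}(T)$ for $t>T$, the optimality term is exactly $\Expt[\|\bar{x}(T) - x^\ast\|_2^2]$, which Theorem~\ref{thm:conv} already bounds by $C_T \tfrac1T + C_{\log T}\tfrac{\log T}{T} + C_{\sqrt[4]{T}}\tfrac{1}{\sqrt[4]{T}} + C_{(\epsilon,\delta)}$. Taking expectations in the split inequality and substituting the two bounds yields the stated estimate, the overall factor $2$ being carried through from the $\|a+b\|_2^2$ step. The differential-privacy claim is inherited verbatim from Theorem~\ref{thm:conv}: Stage~II broadcasts only the already-released iterates $x_i(t-1)$ and introduces no new data-dependent randomness, so it adds nothing to the privacy analysis. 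There is no genuinely hard step once Theorem~\ref{thm:conv} is in hand; the only points requiring care are the average-invariance $\bar{x}(t)=\bar{x}(T)$ (which relies on double stochasticity together with the absence of noise in Stage~II) and the geometric decay of the disagreement, plus the remark that projection onto the compact domain $\mathcal{X}$ keeps $\|\mathbf{x}(T)\|_2$ bounded.
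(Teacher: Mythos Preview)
Your proposal is correct and follows essentially the same route as the paper: establish $\bar{x}(t)=\bar{x}(T)$ for $t>T$ from double stochasticity of $W$, split $\|x_i(t)-x^\ast\|_2^2 \le 2\|x_i(t)-\bar{x}(t)\|_2^2 + 2\|\bar{x}(T)-x^\ast\|_2^2$, bound the second piece by Theorem~\ref{thm:conv}, and bound the first by the geometric contraction of $W^{t-T}$ on the orthogonal complement of $\mathbf{1}_N$ (which is exactly the Stage~II case of Lemma~\ref{lemma:dist_mean}). One small caveat: $\mathbf{x}(T)$ depends on the Stage~I noise and is therefore random, not ``a finite deterministic quantity''; your compactness observation does give a deterministic upper bound, but the paper itself writes $C_{exp}=2\|\mathbf{x}(T)\|^2$ without taking an expectation, so your argument matches the paper's level of precision.
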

\begin{proof}

We observe that in Stage II, the mean parameter $\bar{x}(t)$ remains constant since
\begin{align}
    \bar{x}(t+1) &\stackrel{(a)}= \frac{1}{N} \lp \mathbf{1}_N^T \otimes I_P \rp \mathbf{x}(t+1) \nonumber \\
    &\stackrel{(b)}= \frac{1}{N} \lp \mathbf{1}_N^T \otimes I_P \rp \lp  W \otimes I_p \rp  \mathbf{x}(t) \nonumber \\
    &\stackrel{(c)}= \frac{1}{N} \lp \mathbf{1}_N^T \otimes I_P \rp \mathbf{x}(t) = \bar{x}(t), \label{eq:cor:conv:mean1}
\end{align} where we rewrote the mean parameter using the Kronecker product in $(a)$, $(b)$ follows directly from \eqref{eq:def:stage2:consensus} and we used double stochasity of $W$ in $(c)$. Now we are ready to conclude the result, 
\begin{align}
    \Expt[\| x_i(t) - x^\ast \|_2^2] &\stackrel{(a)}= \Expt[\| x_i(t) - \bar{x}(t) + \bar{x}(T) - x^\ast \|_2^2] \label{eq:thm:conv:corr} \\
    &\stackrel{(b)}\leq 2\Expt[\| x_i(t) - \bar{x}(t) \|_2^2 ] + 2\Expt[ \| \bar{x}(T) - x^\ast \|_2^2 ], \nonumber
\end{align} where $(a)$ follows from \eqref{eq:cor:conv:mean1}, and we used the inequality $\|a + b \|_2^2 \leq 2 \| a \|_2^2 + 2 \| b \|_2^2$ in $(b)$. Using Theorem \ref{thm:conv} and Lemma \ref{lemma:dist_mean} it is straightforward to conclude the result.
\end{proof}

\section{Privacy and Convergence Analysis}
\label{sec:analysis}
In this section we outline the proofs for Theorems \ref{thm:dp} and \ref{thm:conv} followed by explanation and intuition.

\subsection{Proof of Theorem \ref{thm:dp}}
%In the context of differential privacy, the corresponding mechanism for the distributed algorithm maps $D := \cup_{i \in [N]} D_i$ to a sequence of messages $\{ y(t) \}_{t = 1}$. In order to satisfy $(\epsilon, \delta)$-DP, the output of the mechanism should satisfy the condition \eqref{eq:prop:dp:pdf} in Proposition \ref{prop:dp:equiv}. We proceed by writing the privacy loss random and bounding it using Azuma-Hoeffding's inequality. The complete proof is included in the appendix.
In the context of differential privacy, the corresponding mechanism for the distributed algorithm maps $D := \cup_{i \in [N]} D_i$ to a sequence of messages $\{\mathbf{y}(t) \}_{t = 1}$. In order to satisfy $(\epsilon, \delta)$-DP, the output of the mechanism should satisfy the condition \eqref{eq:prop:dp:pdf} in Proposition \ref{prop:dp:equiv}. We proceed by writing the privacy loss random and bounding it using the concentration inequalities. The complete proof is included in Appendix.

\subsection{Proof of Theorem \ref{thm:conv}}

It is straightforward to verify this choice of $\eta_t$ and $M_t$ satisfy \eqref{eq:prop:dp:pdf} and therefore the distributed algorithm is deferentially private. The proof of convergence consists of two parts. First we show that the local parameters are bounded away from mean in expectation, which is depicted in Lemma \ref{lemma:dist_mean}. The proof proceeds by bounding deviation of the mean parameter to the optimal point. Putting these together, the result follows.

\begin{lemma}
Under Assumption \ref{asump:bddgrad}, at round $t$, the following bound holds on the distance of local parameters to the mean for $t < T$,
\begin{align}
 \| \mathbf{z}(t) - \mathbf{1}_{N}  \otimes \bar{z}(t) \| \leq \| \mathbf{n}(t) \| &+ 2\sum_{s=1}^{t-1}  \beta^{t-s} \| \mathbf{n}(s) \|  \\
 &+ \sqrt{N} G \sum_{s=1}^{t-1} \eta_s  \beta^{t-s}, \nonumber
\end{align}
where $\bar{z}(t) \stackrel{\triangle}= \frac{1}{N} \sum_{i \in [N]} z_i(t)$. And the following holds for $t \geq T$,
\begin{align}
  \| &\mathbf{x}(t) -  \mathbf{1}_{N} \otimes \bar{x}(t) \| \leq \beta^{t-T} \| \mathbf{x}(T) \|.
\end{align}
where $\otimes$ denotes the Kronecker product. \label{lemma:dist_mean}
\end{lemma}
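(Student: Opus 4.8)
The plan is to treat the two regimes of the statement separately, using the same two structural facts throughout. Let $J:=\tfrac{1}{N}\mathbf{1}_N\mathbf{1}_N^\top$, so that $J\otimes I_p$ is the orthogonal projector onto the consensus subspace $\{\mathbf{1}_N\otimes c : c\in\mathbb{R}^p\}$; then $\mathbf{1}_N\otimes\bar{z}(t)=(J\otimes I_p)\mathbf{z}(t)$ and $\mathbf{1}_N\otimes\bar{x}(t)=(J\otimes I_p)\mathbf{x}(t)$ are the \emph{nearest} consensus vectors (in Euclidean norm) to $\mathbf{z}(t)$ and $\mathbf{x}(t)$ respectively. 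Double stochasticity of $W$ (Assumption \ref{asump:weightmatrix}) gives $WJ=JW=J$, hence $(I-J)W=W-J=(W-J)(I-J)$ and $\|W-J\|=\beta$, while $\|W\|=1$. I will also write $\nabla F(\mathbf{z}):=[\nabla f_1(z_1);\cdots;\nabla f_N(z_N)]$, so that $\|\nabla F(\mathbf{z})\|\le\sqrt{N}G$ whenever $\mathbf{z}\in\mathcal{X}^N$ by Assumption \ref{asump:bddgrad}, and I will repeatedly use that $\Proj_{\mathcal{X}^N}$ acts blockwise and is non-expansive. With this in hand the case $t\ge T$ is immediate: by \eqref{eq:def:stage2:consensus} no projection is needed (Assumption \ref{asump:domain}) and $\mathbf{x}(t)=(W\otimes I_p)\mathbf{x}(t-1)$, so $\mathbf{x}(t)-\mathbf{1}_N\otimes\bar{x}(t)=((W-J)\otimes I_p)\big(\mathbf{x}(t-1)-\mathbf{1}_N\otimes\bar{x}(t-1)\big)$; taking norms contracts the disagreement by $\beta$ at each step, and iterating down to $t=T$ together with $\|\mathbf{x}(T)-\mathbf{1}_N\otimes\bar{x}(T)\|=\|((I-J)\otimes I_p)\mathbf{x}(T)\|\le\|\mathbf{x}(T)\|$ yields the stated bound $\beta^{t-T}\|\mathbf{x}(T)\|$.

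For $t<T$ I will build a coupled recursion for $e(t):=\mathbf{z}(t)-\mathbf{1}_N\otimes\bar{z}(t)$ and $d(t):=\mathbf{x}(t)-\mathbf{1}_N\otimes\bar{x}(t)$. For the gradient step \eqref{eq:def:GD}, $\mathbf{x}(t)=\Proj_{\mathcal{X}^N}(\mathbf{z}(t)-\eta_t\nabla F(\mathbf{z}(t)))$; comparing $\mathbf{x}(t)$ with the consensus vector $\Proj_{\mathcal{X}^N}((J\otimes I_p)(\mathbf{z}(t)-\eta_t\nabla F(\mathbf{z}(t))))$ — which is legitimate because the averaged pre-projection point lies in $\mathcal{X}$ by convexity and $\mathbf{1}_N\otimes\bar x(t)$ is the nearest consensus vector — non-expansiveness gives $\|d(t)\|\le\|((I-J)\otimes I_p)(\mathbf{z}(t)-\eta_t\nabla F(\mathbf{z}(t)))\|\le\|e(t)\|+\eta_t\sqrt{N}G$. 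For the consensus step \eqref{eq:def:consensus}, I write $\mathbf{y}(t)=\mathbf{x}(t-1)+\mathbf{n}(t)$ from the broadcast rule \eqref{eq:def:noise}, first peel off the noise through the triangle inequality — comparing $\mathbf{z}(t)$ with $\mathbf{z}'(t):=\Proj_{\mathcal{X}^N}((W\otimes I_p)\mathbf{x}(t-1))$, whose difference is at most $\|(W\otimes I_p)\mathbf{n}(t)\|\le\|\mathbf{n}(t)\|$ — and then apply the same nearest-consensus-vector argument to $\mathbf{z}'(t)$, using $JW=J$ and $(W-J)(I-J)=W-J$ together with $\|W-J\|=\beta$, to contract its disagreement: this gives $\|e(t)\|\le\|\mathbf{n}(t)\|+\beta\|d(t-1)\|$.

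Substituting the first bound into the second produces $\|e(t)\|\le\|\mathbf{n}(t)\|+\beta\|e(t-1)\|+\beta\sqrt{N}G\,\eta_{t-1}$, which I will unroll from the base case $e(1)=0$ (valid since $\mathbf{y}(1)=0$ forces $\mathbf{z}(1)=\mathbf{1}_N\otimes\Proj_{\mathcal{X}}(0)$ to be a consensus vector), collecting the geometric weights $\beta^{t-s}$ on $\|\mathbf{n}(s)\|$ and on $\eta_s$; bounding the resulting constants generously then gives exactly the claimed inequality.

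The one genuinely nontrivial point — and the step I expect to be the main obstacle — is the interaction between the Euclidean projections onto $\mathcal{X}^N$ and the averaging operator $J\otimes I_p$: these do not commute, so there is no exact linear recursion for the disagreement. The device that bypasses this is to bound each projected iterate's disagreement by that of its pre-projection argument, using that the blockwise projection of the averaged argument is itself a consensus vector (this is where convexity of $\mathcal{X}$, Assumption \ref{asump:domain}, is essential) and that the true average is the closest consensus vector; everything else is the triangle inequality, the spectral facts $\|W\|=1$ and $\|W-J\|=\beta$, the gradient bound $\|\nabla F\|\le\sqrt{N}G$ from Assumption \ref{asump:bddgrad}, and summation of a geometric series.
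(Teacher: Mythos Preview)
Your argument is correct, but it proceeds by a different decomposition than the paper's. The paper introduces the \emph{unprojected} consensus iterate $\hat z_i(t):=\sum_j w_{ij}y_j(t)$ and writes $x_i(t)=\hat z_i(t)+u_i(t)+v_i(t)$, where $u_i(t)=z_i(t)-\hat z_i(t)$ is the projection correction in the consensus step (bounded by the incoming noise) and $v_i(t)=x_i(t)-z_i(t)$ is the net effect of the projected gradient step (bounded by $G\eta_t$); it then proves a separate sub-lemma giving a linear recursion for the disagreement of $\hat{\mathbf z}(t)$ and finally adds back $\|\mathbf u(t)\|$. Your approach instead tracks the two disagreement vectors $e(t)$ and $d(t)$ directly and handles the projections via the variational fact that $\mathbf 1_N\otimes\bar x$ is the \emph{closest} consensus vector, so any hand-picked consensus comparison point (the projected average of the pre-projection argument) upper-bounds the true disagreement. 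Both routes use exactly the same ingredients---non-expansiveness of $\Proj_{\mathcal X}$, convexity of $\mathcal X$, $\|W-J\|=\beta$, and the gradient bound---but yours avoids the auxiliary sequence and the intermediate lemma, gives the cleaner one-line recursion $\|e(t)\|\le\|\mathbf n(t)\|+\beta\|e(t-1)\|+\beta\sqrt N G\,\eta_{t-1}$, and in fact produces the sharper coefficient $\beta^{t-s}$ (rather than $2\beta^{t-s}$) on the past noise terms before you ``bound generously.'' The paper's decomposition, on the other hand, makes the two sources of the factor $2$ explicit: one copy of the noise enters through the recursion for $\hat{\mathbf z}$ and another through the current projection correction $\mathbf u(t)$.
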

\begin{proof}
See Appendix.
\end{proof}

Let us define $S(t) \stackrel{\triangle}= \sum_{i \in [N]} \Expt[ \| x_i(t) - x^\ast \|^2 ]$ where $x^\ast$ is the global minimum of $f(x)$. Observe that,
$$\Expt[\| \bar{x}(t) - x^\ast \|^2] \leq \frac{1}{N} S(t), $$ where we used $\lp \sum_{i \in [N]} \| a_i \| \rp^2 \leq N \sum_{i \in [N]} \| a_i \|^2 $. Therefore we proceed by bounding $\Expt[ \| x_i(t) - x^\ast \|^2 ]$ for $i \in [N]$ in order to bound $\Expt[\| \bar{x}(t) - x^\ast \|^2]$.

Using the standard techniques, we bound terms $\|x_{i}(t) - x^{\ast} \|^2$ one by one (for the clarity of the presentation, the time index dropped wherever it is clear from the context). It is well known that the projection operator is \emph{non-expansive}, \emph{i.e.}, $\| \Proj_{\mathcal{X}}(x) - \Proj_{\mathcal{X}}(y) \| \leq \|x - y\|$ for $x, y \in \mathbb{R}^p$. Putting this together with Assumption \ref{asump:domain} ($x^\ast \in \mathcal{X}$) we have,
\begin{align}
\|x_{i}(t) - x^{\ast} \|^2 
&= \| \Proj_{\mathcal{X}} \lp z_{i} - \eta_t \nabla f_i(z_i) \rp - \Proj_{\mathcal{X}} \lp x^{\ast} \rp  \|^2 \nonumber \\
&\leq \|  z_{i}(t) - \eta_t \nabla f_i(z_i) - x^{\ast} \|^2  \label{eq:thm:conv:ind:1}
\end{align} In order to bound RHS of \eqref{eq:thm:conv:ind:1} we first present a lemma.
\begin{lemma}[see for example Theorem 2.1.12 in \cite{nesterov2007}]
Suppose that $f$ is $L$-smooth and $\mu$-strongly over an open set containing $\mathcal{X}$, then we have,
\begin{align}
    \langle x - y, \nabla f(x) - \nabla f(y) \rangle \geq c_1 \| x - y \|^2 + c_2 \| \nabla f(x) - \nabla f(y) \|^2, \label{eq:lemma:cocoercivity}
\end{align} where $c_1 = \frac{\mu L}{\mu + L}$ and $c_2 =  \frac{1}{\mu + L}$. \label{lemma:cocoercivity}
\end{lemma}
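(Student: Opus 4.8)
The plan is to reduce the claimed inequality to the familiar co-coercivity property of the gradient of a convex, smooth function. First I would introduce $g := f - \tfrac{\mu}{2}\|\cdot\|^2$ on the open set on which the hypotheses are posited. By Assumption \ref{asump:strong_conv}, $g$ is convex; combining convexity of $f$ (which follows from strong convexity) with Assumption \ref{asump:smooth}, via the standard equivalence that ``$f$ convex and $L$-smooth'' $\Longleftrightarrow$ ``$\tfrac{L}{2}\|\cdot\|^2 - f$ convex'', shows that $g$ is a convex function whose gradient $\nabla g(x) = \nabla f(x) - \mu x$ is $(L-\mu)$-Lipschitz. The degenerate case $L=\mu$ forces $\nabla g$ to be constant, i.e. $\nabla f(x)-\nabla f(y)=\mu(x-y)$, for which \eqref{eq:lemma:cocoercivity} holds with equality (both sides equal $\mu\|x-y\|^2$); so assume $L>\mu$.

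Next I would invoke co-coercivity for $g$: for a convex, $M$-smooth function one has $\langle x-y,\nabla g(x)-\nabla g(y)\rangle \ge \tfrac{1}{M}\|\nabla g(x)-\nabla g(y)\|^2$ (Theorem 2.1.5 in \cite{nesterov2007}). Taking $M=L-\mu$ and substituting $\nabla g(x)-\nabla g(y) = \bigl(\nabla f(x)-\nabla f(y)\bigr) - \mu(x-y)$, then abbreviating $a := \langle x-y,\nabla f(x)-\nabla f(y)\rangle$, $b := \|x-y\|^2$, $c := \|\nabla f(x)-\nabla f(y)\|^2$, the inequality becomes
\begin{align*}
a-\mu b \;\ge\; \frac{1}{L-\mu}\bigl(c - 2\mu a + \mu^2 b\bigr).
\end{align*}
Multiplying through by $L-\mu>0$ and collecting terms, the $\mu^2 b$ contributions cancel and one is left with $(L+\mu)\,a \ge c + \mu L\,b$, i.e. $a \ge \tfrac{\mu L}{\mu+L}\,b + \tfrac{1}{\mu+L}\,c$, which is exactly \eqref{eq:lemma:cocoercivity} with $c_1=\tfrac{\mu L}{\mu+L}$ and $c_2=\tfrac{1}{\mu+L}$.

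The one step needing care — and the main, though routine, obstacle — is the co-coercivity inequality itself, whose usual proof (minimize $z\mapsto g(z)-\langle\nabla g(x),z\rangle$, apply the descent lemma at one gradient step from $y$, then symmetrize in $x$ and $y$) involves the point $y-\tfrac{1}{L-\mu}\nabla g(y)$, which need not lie in $\mathcal{X}$. Since Assumptions \ref{asump:smooth}--\ref{asump:strong_conv} are assumed on an open set containing $\mathcal{X}$ and \eqref{eq:lemma:cocoercivity} only concerns $x,y\in\mathcal{X}$, I would handle this either by extending $g$ to a convex, $(L-\mu)$-smooth function on all of $\mathbb{R}^p$ (preserving the constant), or by re-deriving co-coercivity from the identity $g(v)-g(u)=\int_0^1\langle\nabla g(u+s(v-u)),\,v-u\rangle\,ds$ along segments contained in the open set; either route is standard. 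Alternatively one may simply cite Theorem 2.1.12 of \cite{nesterov2007} verbatim, the computation above being its proof.
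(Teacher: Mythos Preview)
The paper does not prove this lemma at all; it is stated with the parenthetical ``see for example Theorem 2.1.12 in \cite{nesterov2007}'' and used as a black box. Your proposal is correct and is in fact the standard derivation of Theorem 2.1.12 in \cite{nesterov2007}: subtract the strong-convexity quadratic to obtain a convex $(L-\mu)$-smooth function, apply the Baillon--Haddad/co-coercivity inequality, and rearrange. So there is nothing to compare against in the paper itself; you have simply supplied the proof the authors chose to cite rather than reproduce, and your treatment of the domain issue (working on the ambient open set or extending $g$) is an appropriate extra care that the citation glosses over.
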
 

We proceed by expanding \eqref{eq:thm:conv:ind:1} and by adding and subtracting $\nabla f_i(x^\ast)$, 
\begin{align}
\|x_{i}&(t) - x^{\ast} \|^2  \nonumber \\
 &\leq \| z_i - x^{\ast} \|^2 - 2 \eta_t \langle z_i - x^{\ast}, \nabla f_i(z_i) - \nabla f_i(x^\ast) + \nabla f_i(x^\ast) \rangle \nonumber \\ & \hspace{0.32\textwidth} + \eta_t^2 \| \nabla f_i(z_i) \|^2 \nonumber \\
 &\stackrel{(a)}\leq (1-  \frac{2 \mu L }{ \mu+L} \eta_t) \| z_i - x^{\ast} \|^2 - \frac{2 \eta_t}{\mu+L} \| \nabla f_i(z_i) - \nabla f_i(x^\ast) \|_2^2 \nonumber \\ 
 & \hspace{0.14\textwidth} + 2\eta_t \langle \nabla f_i(x^\ast), x^{\ast} - z_i \rangle + \eta_t^2 \| \nabla f_i(z_i) \|^2 \nonumber \\
 &\stackrel{(b)}\leq (1-  \frac{2 \mu L}{ \mu+L} \eta_t) \| z_i - x^{\ast} \|^2 + \eta_t^2 G^2 + 2 \eta_t \langle \nabla f_i(x^\ast), x^{\ast} - z_i \rangle \nonumber  \\
 &\stackrel{(c)}\leq (1-  \frac{2 \mu L}{ \mu+L} \eta_t) \| z_i - x^{\ast} \|^2 + \eta_t^2 G^2 + 2 \eta_t \langle \nabla f_i(x^\ast), x^{\ast} - \bar{z} \rangle \nonumber \\
 & \hspace{.23\textwidth}-  2 \eta_t \langle \nabla f_i(x^\ast), z_i - \bar{z} \rangle					\label{eq:thm:conv:ind:2}
\end{align} where $(a)$ follows from Lemma \ref{lemma:cocoercivity}. We used Assumption \ref{asump:bddgrad} for $(b)$, and $(c)$ comes from adding and subtracting $\bar{z}$. The following lemma is useful in order to connect \eqref{eq:thm:conv:ind:2} to $S(t)$.

\begin{lemma}
For any (fixed) $x \in \mathcal{X}$ the following holds,
\begin{align}
 \sum_{i \in [N] } \Expt[ \| z_i(t) - x \|^2 ]  \leq \sum_{i \in [N] } \Expt[ \| x_i(t-1) - x \|^2 ] + d N M_{t-1}^2.
\end{align} \label{lemma:sum:noise}
\end{lemma}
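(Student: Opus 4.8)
The plan is to unwind one consensus step, exploit the non-expansiveness of the projection, and then use the double stochasticity of $W$ together with the fact that the injected noise is zero-mean and independent of the past.

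First I would use that $x \in \mathcal{X}$, so $\Proj_{\mathcal{X}}(x) = x$, and that $\Proj_{\mathcal{X}}$ is non-expansive. Combined with the definition \eqref{eq:def:consensus} of $z_i(t)$ and the message rule \eqref{eq:def:noise}, which gives $y_j(t) = x_j(t-1) + n_j(t-1)$, this yields
\begin{align}
\| z_i(t) - x \|^2 &= \Big\| \Proj_{\mathcal{X}}\Big( \sum_{j \in \mathcal{N}_i} w_{ij} y_j(t) \Big) - \Proj_{\mathcal{X}}(x) \Big\|^2
\leq \Big\| \sum_{j \in \mathcal{N}_i} w_{ij}\big( x_j(t-1) + n_j(t-1) - x \big) \Big\|^2, \nonumber
\end{align}
where in the last step I used $\sum_{j} w_{ij} = 1$ (row stochasticity) to write $x = \sum_j w_{ij} x$. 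Since the $w_{ij}$ are non-negative and sum to one, convexity of $\|\cdot\|^2$ (Jensen's inequality) gives
\begin{align}
\| z_i(t) - x \|^2 \leq \sum_{j \in \mathcal{N}_i} w_{ij}\, \| x_j(t-1) + n_j(t-1) - x \|^2. \nonumber
\end{align}

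Next I would expand the summand as $\|x_j(t-1) - x\|^2 + 2\langle x_j(t-1) - x,\, n_j(t-1)\rangle + \|n_j(t-1)\|^2$ and take expectations. The cross term vanishes because $n_j(t-1)$ is zero-mean and independent of the history $x_j(t-1)$ (this conditioning/filtration argument is the one point that needs to be spelled out carefully, and it is the main — if minor — obstacle); and $\Expt[\|n_j(t-1)\|^2] = d\,M_{t-1}^2$ since the noise has i.i.d.\ coordinates of variance $M_{t-1}^2$ in dimension $d$. Hence $\Expt[\| z_i(t) - x \|^2] \leq \sum_{j \in \mathcal{N}_i} w_{ij}\big( \Expt[\| x_j(t-1) - x \|^2] + d\,M_{t-1}^2 \big)$.

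Finally I would sum over $i \in [N]$ and swap the order of summation, using the column stochasticity $\sum_{i} w_{ij} = 1$:
\begin{align}
\sum_{i \in [N]} \Expt[\| z_i(t) - x \|^2]
\leq \sum_{j \in [N]} \Big( \sum_{i \in [N]} w_{ij} \Big) \Expt[\| x_j(t-1) - x \|^2] + d\,M_{t-1}^2 \sum_{i \in [N]} \sum_{j} w_{ij}
= \sum_{j \in [N]} \Expt[\| x_j(t-1) - x \|^2] + d N M_{t-1}^2, \nonumber
\end{align}
which is the claimed bound. The only assumptions used are the doubly stochastic structure of $W$ (Assumption \ref{asump:weightmatrix}), $x^\ast$-type membership $x \in \mathcal{X}$ (here just $x \in \mathcal{X}$), and the independence/zero-mean property of the Gaussian perturbations; no smoothness or convexity of the $f_i$ is needed for this particular lemma.
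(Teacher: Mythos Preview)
Your proof is correct and follows the same skeleton as the paper's: non-expansiveness of $\Proj_{\mathcal{X}}$ (using $x\in\mathcal{X}$), the doubly-stochastic contraction $\sum_i\|\hat z_i-x\|^2\le\sum_j\|y_j-x\|^2$, expansion of $\|y_j-x\|^2$, and expectation killing the cross term. The only cosmetic difference is that the paper obtains the contraction step by stacking into vectors and using the operator-norm bound $\|(W\otimes I_p)v\|\le\|v\|$, whereas you obtain it via Jensen's inequality on each row followed by column stochasticity; these are two standard ways to express the same fact.
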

\begin{proof}
See Appendix.
\end{proof}

By summing up both sides of \eqref{eq:thm:conv:ind:2} across all nodes and Lemma \ref{lemma:sum:noise} we have:
\begin{align}
S(t) \leq ( 1 - \frac{2 \mu L }{\mu+L} \eta_t ) S(t-1) + d N ( 1 - \frac{2 \mu L }{\mu+L} \eta_t ) M^2_{t-1}& \label{eq:thm:conv:mean} \\ 
 \hspace{0.06\textwidth} + N \eta_t^2 G^2 + 2 \eta_t \sum_{i \in [N]} \Expt[ \nabla \langle \nabla f_i(x^\ast),\bar{z} - z_i \rangle ]&, \nonumber
\end{align} where we used the fact that $x^\ast$ is the global minimum and therefore $\sum_{i \in [N]} \nabla f_i(x^\ast) = 0$. It remains to bound the last term in RHS of \eqref{eq:thm:conv:mean}. Note that,
\begin{align}
\sum_{i \in [N]} \langle \nabla f_i(x^\ast),\bar{z} - z_i \rangle &\stackrel{(a)}\leq G \sum_{i \in [N]} \| \bar{z}(t) - z_i(t) \| \nonumber \\
&\stackrel{(b)}\leq G \sqrt{N} \| \mathbf{z}(t) - \mathbf{1}_N \otimes \bar{z}(t) \|, \label{eq:thm:conv:ind:3}
\end{align} where $(a)$ follows from Cauchy-Schwartz inequality along with Assumption \ref{asump:bddgrad} and we used $\lp \sum_{i \in [N]} \| a_i \| \rp^2 \leq N \sum_{i \in [N]} \| a_i \|^2 $ in $(b)$. By applying Lemma \ref{lemma:dist_mean} and taking expectation from both sides of \eqref{eq:thm:conv:ind:3} we have,
\begin{align}
    \sum_{i \in [N]} \Expt[ \nabla \langle \nabla & f_i(x^\ast),\bar{z} - z_i \rangle ] \label{eq:thm:conv:ind4} \\
    &\leq \sqrt{p N} M_t + 2 \sqrt{p} \sqrt{N} \sum_{s=1}^{t-1} M_s \beta^{t-s} + G \sqrt{N} \sum_{s=1}^{t-1} \eta_s. \nonumber
\end{align} Together with \eqref{eq:thm:conv:mean} we have the following recursion for $S(t)$,
\begin{align}
S(t) \leq ( &1 - \frac{2 \mu L }{\mu+L} \eta_t ) S(t-1) + p N ( 1 - \frac{2 \mu L }{\mu+L} \eta_t ) M^2_{t-1} \label{eq:thm:conv:mean2} \\
&+ N \eta_t^2 G^2 \nonumber \\
&+ G \sqrt{p} N M_t + 2 G \sqrt{p} N \sum_{s=1}^{t-1} M_s \beta^{t-s} + G^2 N \sum_{s=1}^{t-1} \eta_s \beta^{t-s}. \nonumber
\end{align}
By taking step size of $\eta_t = \frac{\mu + L}{ 2 \mu L} \frac{1}{t}$, we bound the cumulative effect of each term in \eqref{eq:thm:conv:mean2} for $S(T)$, where $T$ is the number of steps we are evaluating the gradient.

\begin{align}
S(T) \leq  \prod_{t=2}^{T} & (1 - \frac{1}{t}) S(0) \label{eq:thm:conv:terms}
\\ & + p N \sum_{t=1}^{T}M_{t-1}^2 \prod_{s=t}^{T}(1 - \frac{1}{s})  \label{eq:dist:conv3} \nonumber \\ 
&+ G^2N \sum_{t=1}^{T} \eta_t^2 \prod_{s= t+1}^{T}(1-\frac{1}{s}) \nonumber \\   
&+ 2G \sqrt{p} N \sum_{t=1}^{T} \eta_t M_t \prod_{s= t+1}^{T}(1-\frac{1}{s}) \nonumber  \\
&+ 2 G \sqrt{p} N \sum_{t=2}^{T} \eta_t \lp \sum_{s=1}^{t-1} M_s \beta^{t-s} \rp \prod_{s= t+1}^{T}(1-\frac{1}{s}) \nonumber  \tag{*1} \\
&+ G^2 N \sum_{t=2}^{T} \eta_t \lp \sum_{s=1}^{t-1} \eta_s \beta^{t-s} \rp \prod_{s= t+1}^{T}(1-\frac{1}{s})   \tag{*2} \nonumber 
\end{align}

The second term \eqref{eq:thm:conv:terms} is dominant in terms of the noise variance, and the noise scales $M_t^2 = \frac{2}{\kappa(\epsilon, \delta)} \lp \frac{\mu + L}{2\mu L} \rp^2 \frac{\sqrt{T}}{t \sqrt{t}}$ are found by minimizing this term while taking condition \eqref{eq:privacy} in Theorem \ref{thm:dp} as the constraint. Therefore,% we can bound each term in \ref{eq:thm:conv:terms} accordingly:

\begin{align}
S(T) \leq & \frac{S(0)}{T} +  
\frac{4pN}{\kappa(\epsilon, \delta)} \lp \frac{\mu + L}{2\mu L} \rp^2 \label{eq:thm:conv:1} \\ 
&+ G^2 N \lp \frac{\mu + L}{2\mu L} \rp^2 \frac{\log T}{T} \label{eq:thm:conv:2}\\
&+ \frac{8 \sqrt{2p} G N}{\sqrt{\kappa(\epsilon,\delta)}} \lp \frac{\mu + L}{2\mu L} \rp^2 \frac{1}{\sqrt[4]{T} }  \label{eq:thm:conv:3} \\
& + \frac{2 \sqrt{2p} G N}{\sqrt{\kappa(\epsilon,\delta)}}  \lp \frac{\mu + L}{2\mu L} \rp^2   \lp \frac{3}{1-\beta} \rp \frac{1}{\sqrt[4]{T} } \label{eq:thm:conv:4} \\
& + G^2 N \lp \frac{\mu + L}{2\mu L} \rp^2 \lp \frac{1}{1-\beta} \rp \frac{\log T}{T} , \label{eq:thm:conv:5}
\end{align}where we used $\sum_{t=1}^{T} \frac{1}{t} \leq \log(T)$ and $\sum_{t=1}^{T} \frac{1}{t^{\alpha}} \leq \frac{1}{\alpha + 1} T^{\alpha + 1}$ for $\alpha > -1$ to derive \eqref{eq:thm:conv:1}, \eqref{eq:thm:conv:2} and \eqref{eq:thm:conv:3}. Going from $(*1)$ to \eqref{eq:thm:conv:4} follows from
\begin{align}
\sum_{t=2}^{T} \frac{1}{t} \lp \sum_{s=1}^{t-1} \frac{t}{\sqrt{s} \sqrt[4]{s}}  \beta^{t-s }\rp 
&=  \sum_{s=1}^{T-1} \frac{1}{{s^{ 3/4 }}}  \lp \sum_{t=s+1}^{T} \beta^{t-s }\rp \nonumber \\
&\leq  \frac{1}{1-\beta} \sum_{s=1}^{T-1}  \frac{1}{{s^{1/2 + c/4}}} \nonumber \\
&\leq \frac{4}{1-\beta} \sqrt[4]{T}.
\end{align} And we used the following inequality to go from $(*2)$ to \eqref{eq:thm:conv:5},
\begin{align}
\sum_{t=2}^{T} \frac{1}{t} \lp \sum_{s=1}^{t-1} \frac{1}{s}  \beta^{t-s }\rp 
&=  \sum_{s=1}^{T-1} \frac{1}{ s }  \lp \sum_{t=s+1}^{T} \beta^{t-s }\rp \nonumber \\
&\leq  \frac{1}{1-\beta} \sum_{s=1}^{T-1} \frac{1}{s} \leq \frac{1}{1-\beta} \log T.
\end{align}
The result follows immediately by $\Expt[\| \bar{x}(T) - x^\ast \|_2^2] \leq \frac{1}{N} S(T)$.

\section{Numerical Experiments}
\label{sec:experiments}
In this section, we assess the performance of our method on decentralized mean estimation and we demonstrate the effect of privacy parameters, number of gradient evaluation and graph topology on the error. For the simulations, the communication graph is a connected Erdos-Renyi with edge probability of $p_c = 0.6$ and the weight matrix is $W = I - \frac{2}{3\lambda_{\max}\lp L \rp} L$ where $L$ is the Laplacian of the graph.

\subsection{Distributed mean estimation}

Distributed mean estimation is one of the classical problems in the domain of differential privacy. Suppose data points lie in a cube, $\mathcal{X} := [-R, R]^p$, where $p$ is the dimension of the points and $R$ is the length of each side. In this setup, each node has several data points and they aim to collaboratively find the mean while keeping each data private and preserve $(\epsilon, \delta)$-DP against an adversary that oversees all the messages. We can write down this as the following distributed problem:

\begin{align}
    \bar{d} = \min_{x \in \mathcal{X}} f(x) := \frac{1}{2}\sum_{d \in \cup_{i \in [N]} D_i} \| x - d \|_2^2, \label{eq:meanest}
\end{align} where $D_i$ is the set of points stored in node $i$, and $f_i(x) = \frac{1}{2}\sum_{d \in D_i} \| x - d \|_2^2$ for $i \in [N]$. We generate data randomly according to a truncated Gaussian distribution with mean of $0.7 R$ and the unit variance. Sensitive data points are distributed among $10$ nodes each of which has $100$ data points \emph{i.e.}, $|D_i| = 100$. The conditional $l_2$ sensitivity of the distributed algorithm 
\begin{align}
    \Delta(t) &\stackrel{\triangle}= \sup_{i \in [N]} \sup_{D \sim D'} \| x^D_{i}(t) - x^{D'}_{i}(t) \|_2 \nonumber \\
    &\leq \sup_{d, d' \in \mathbb{D}} \eta_t \|d -d' \|_2 \leq 2 R \sqrt{p} \eta_t, \nonumber
\end{align} and we generate Gaussian noise accordingly.

In order to demonstrate the convergence rate of the algorithm, we run the distributed algorithm for different values of $T$, number of gradient descent steps. Figure \ref{fig:meaneast:T} illustrates the effect of $T$ on the normalized error $\frac{\| \bar{x}\lp T \rp - \bar{d}  \|_2^2}{\| \bar{d} \|_2^2}$. It shows that the error reduces until reaching a neighborhood of $x^*$, which agrees with intuition provided by Theorem \ref{thm:conv}.

\begin{figure}
    \centering
    \includegraphics[scale = 0.5]{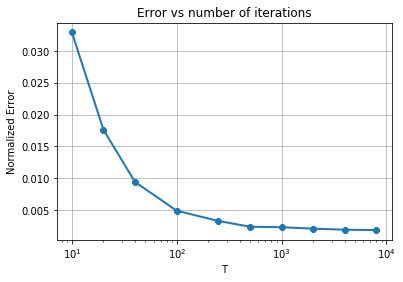}
    \caption[scale=0.5]{The normalized error vs the number of GD steps, $T$ for $\epsilon = 4$ and $\delta = 1/(N \ast N_i) $.}
    \label{fig:meaneast:T}
\end{figure}

Figure \ref{fig:meanest:eps} demonstrates the normalized error vs $\epsilon$ for different values of $\delta \in  \{ 1/(N*N_{i}) , 1/(N*N_{i})^2, 1/(N*N_{i})^3 \}$ where $N_i$ denotes the number of data points in each node, and $T = 1000$. We observe that for a fixed value of $\delta$ by strengthening the privacy guarantee the error increases $O(\frac{1}{\epsilon^2})$.

\begin{figure}
    \centering
    \includegraphics[scale = 0.5]{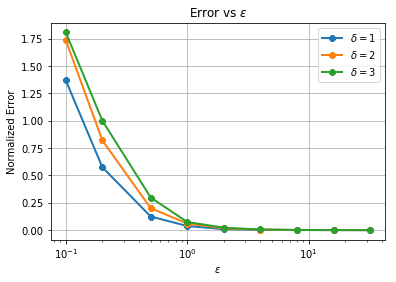}
    \caption{The normalized error of the distributed mean estimation vs $\epsilon$ for a fixed number of nodes and data points per node.}
    \label{fig:meanest:eps}
\end{figure}

To observe the effect of the graph topology, we fix the privacy parameters and vary the connectivity probability of the underlying graph by choosing $p_c \in \{0.1, 0.3, 0.6, 1 \}$. Figure \ref{fig:meaneast:prob} illustrates the effect of connectivity on error of an individual node $\frac{ \| x_i\lp T \rp - \bar{d}  \|_2^2}{\| \bar{d} \|_2^2}$.

\begin{figure}
    \centering
    \includegraphics[scale = 0.5]{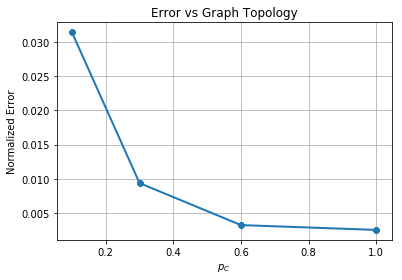}
    \caption[scale=0.5]{As the connectivity of the graph increases, we observe a decrease in the error of the first node.}
    \label{fig:meaneast:prob}
\end{figure}

Figure \ref{fig:meaneast:N_agent} illustrates the regret bound when the number of data points per each node is increasing. We observe a gain in the utility bound that is inline with Theorem \ref{thm:conv}. Increasing the number of data points doesn't increase the conditional sensitivity while making the Gradient bigger hence the effective added noise is reduced.

\begin{figure}
    \centering
   \includegraphics[scale = 0.5]{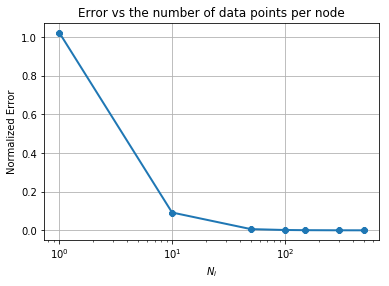}
    \caption[scale=0.5]{The error versus number of data points per each node for a fixed $T$, $\epsilon = 4$ and $\delta = 1/(N \ast N_i) $.}
    \label{fig:meaneast:N_agent}
\end{figure}

\section{conclusion}
\label{sec:conclusion}

In this work, we studied the consensus-based distributed optimization algorithm when the data points are distributed across several trusted nodes and the privacy of each data point is important against an adversary that oversees communications across nodes. In order to protect the privacy of users, each node perturbs it's local state with an additive noise before sending it out to its neighbors. Differential privacy is a rigorous privacy criterion for data analysis that provides meaningful guarantees regardless of what an adversary knows ahead of time about individuals' data. We considered $(\epsilon, \delta)$-DP as the privacy measure and we derived the amount of noise needed in order to guarantee privacy of each data point. We further showed that the parameters converge to a neighborhood of the optimal point, and the size of the neighborhood is proportional to the privacy metrics.

%%%%%%%%%%%%%%%%%%%%%%%%% BIB %%%%%%%%%%%%%%%%%%
\bibliographystyle{ieeetr}
\bibliography{references}

\begin{thebibliography}{10}

\bibitem{Dwork2014}
C.~Dwork and A.~Roth, ``The algorithmic foundations of differential privacy,''
  {\em Foundations and Trends{\textregistered} in Theoretical Computer
  Science}, vol.~9, no.~3--4, pp.~211--407, 2014.

\bibitem{Sarwate2013}
A.~D. Sarwate and K.~Chaudhuri, ``Signal processing and machine learning with
  differential privacy: Algorithms and challenges for continuous data,'' {\em
  IEEE signal processing magazine}, vol.~30, no.~5, pp.~86--94, 2013.

\bibitem{dist_09}
A.~Nedic and A.~Ozdaglar, ``Distributed subgradient methods for multi-agent
  optimization,'' {\em IEEE Transactions on Automatic Control}, vol.~54,
  pp.~48--61, Jan 2009.

\bibitem{wotao16}
K.~Yuan, Q.~Ling, and W.~Yin, ``On the convergence of decentralized gradient
  descent,'' {\em SIAM Journal on Optimization}, vol.~26, no.~3,
  pp.~1835--1854, 2016.

\bibitem{Nitin2015}
Z.~Huang, S.~Mitra, and N.~Vaidya, ``Differentially private distributed
  optimization,'' in {\em Proceedings of the 2015 International Conference on
  Distributed Computing and Networking}, ICDCN '15, (New York, NY, USA),
  pp.~4:1--4:10, ACM, 2015.

\bibitem{chaudhuri2011}
K.~Chaudhuri, C.~Monteleoni, and A.~D. Sarwate, ``Differentially private
  empirical risk minimization,'' {\em Journal of Machine Learning Research},
  vol.~12, no.~Mar, pp.~1069--1109, 2011.

\bibitem{Smith2014}
R.~Bassily, A.~Smith, and A.~Thakurta, ``Private empirical risk minimization:
  Efficient algorithms and tight error bounds,'' in {\em Foundations of
  Computer Science (FOCS), 2014 IEEE 55th Annual Symposium on}, pp.~464--473,
  IEEE, 2014.

\bibitem{Chaudhuri_09}
K.~Chaudhuri and C.~Monteleoni, ``Privacy-preserving logistic regression,'' in
  {\em Advances in Neural Information Processing Systems 21} (D.~Koller,
  D.~Schuurmans, Y.~Bengio, and L.~Bottou, eds.), pp.~289--296, Curran
  Associates, Inc., 2009.

\bibitem{Zhang2017}
J.~Zhang, K.~Zheng, W.~Mou, and L.~Wang, ``Efficient private erm for smooth
  objectives,'' {\em arXiv preprint arXiv:1703.09947}, 2017.

\bibitem{Wu2017}
X.~Wu, F.~Li, A.~Kumar, K.~Chaudhuri, S.~Jha, and J.~Naughton, ``Bolt-on
  differential privacy for scalable stochastic gradient descent-based
  analytics,'' in {\em Proceedings of the 2017 ACM International Conference on
  Management of Data}, pp.~1307--1322, ACM, 2017.

\bibitem{nesterov2007}
Y.~Nesterov, ``Gradient methods for minimizing composite objective function,''
  2007.

\bibitem{HDP}
R.~Vershynin, {\em High-dimensional probability: An introduction with
  applications in data science}, vol.~47.
\newblock Cambridge University Press, 2018.

\end{thebibliography}

\appendix

\begin{proof}[Proof of Proposition \ref{prop:conditionalsens}]
Recall from \eqref{eq:def:GD} that $z_i(t)$ is a function of $\{ \mathbf{y}(t) \}_{t = 1}^{t}$, therefore conditioned on the same set of messages $\{ \mathbf{y}(t) \}_{t = 1}^{t}$ we have the following , % evaluated on two neighboring databases,
\begin{align}
    \| &x_i^D(t) - x_i^{D'}(t) \|_2 \label{eq:prop:conditionalsens:proof} \\ 
    &= \| \Proj_{\mathcal{X}} \lp z_i^D(t) - \eta_t \nabla f_i^D (z_i^D(t)) \rp \nonumber \\ 
    &\hspace{0.18\textwidth}- \Proj_{\mathcal{X}} \lp z_i^{D'}(t) - \eta_t \nabla f_i^{D'} (z_i^{D'}(t)) \rp \| \nonumber \\
    & \stackrel{(a)}\leq \eta_t \| \nabla f_i^D (z_i^D(t)) -  \nabla f_i^{D'} (z_i^{D'}(t)) \| \stackrel{(b)}\leq 2 \eta_t G, \nonumber
\end{align} where $(a)$ follows from non-expansiveness of the projection operator and Assumption \ref{asump:bddgrad} implies $(b)$. Taking the supermom from both sides of \eqref{eq:prop:conditionalsens:proof} implies the result directly.
\end{proof}

\begin{proof}[Proof of Theorem \ref{thm:dp}]
In order to prove Algorithm \ref{alg:PDGD} satisfies $(\epsilon, \delta)$-DP, we check condition \eqref{eq:thm:dp} in Proposition \ref{prop:dp:equiv}. Recall that the adversary can only observe messages among nodes and the privacy loss random variable is a function of these observations. We derive an analytical expression for the privacy loss random variable and bound it using concentration inequalities.

Note that we can write the pdf of $\mathbf{y}(1), \cdots, \mathbf{y}(T)$ as follows:
\begin{align*}
\mbox{pdf}_D(\mathbf{y}(1), \cdots, \mathbf{y}(T) ) &= \prod_{t} \mbox{pdf}_{D}(\mathbf{y}(t+1) | \mathbf{y}(1), \cdots, \mathbf{y}(t) )  \\
& \stackrel{(a)}= \prod_{t} \mbox{pdf}_{D}(\mathbf{y}(t+1) | \mathbf{y}(t) )  \\
& \stackrel{(b)}= \prod_{t} \prod_{k \in [N]} \mbox{pdf}_{D}(y_{k}(t+1) | \mathbf{y}(t) ) \\
& \stackrel{(c)}= \prod_{t} \prod_{k \in [N]} p_{M_t}( y_{k}(t+1) - x_{k}(t) )  ), 
\end{align*} where $(a)$ follows since the randomness comes from the additive noise \eqref{eq:def:noise} and $\mathbf{y}(t+1)$ conditioned on $\mathbf{n}(t)$ (and therefore $\mathbf{y}(t)$) is independent of all the previous coin tosses of the algorithm. Noise injected independently across nodes which implies $(b)$. In $(c)$, we wrote the conditional pdf of $y_k(t+1)$ in terms of density function of a Gaussian, $p_{M_t}$.
%Note that $\mathbf{z}(t)$ is a deterministic function of $\mathbf{y}(t)$ so are $\{ x_k(t) \}_{k \in [N]}$.

Let us denote the privacy loss random variable for $T$ stages with $c( \mathbf{y}(1), \cdots, \mathbf{y}(T) )$. We distinguish the variables associated with neighboring database $D'$ with $'$. In the context of this problem, neighboring databases differ in at most one data point, \emph{i.e.}, at most one node may have a different function. We denote this node with $k^* \in [N]$.
\begin{align}
&c( \mathbf{y}(1), \cdots, \mathbf{y}(T) )  = \log \frac{\mbox{pdf}_D(\mathbf{y}(1), \cdots, \mathbf{y}(T) )}{\mbox{pdf}_{D'}(\mathbf{y}(1), \cdots, \mathbf{y}(T) )} \label{eq:thm:dp:c_def} \\
&= \sum_{t} \sum_{k \in [N]} \frac{ - \| y_{k}(t+1) - x_{k}(t)  \|^2 }{ 2M_t^2 } + \frac{  \| y_{k}(t+1) - x'_{k}(t)  \|^2 }{ 2M_t^2 } \nonumber \\
&= \sum_{t} \sum_{k \in [N]} \frac{  \| x_{k}(t) - x'_{k}(t)  \|^2 }{ 2M_t^2 } \nonumber \\
&\hspace{0.18\textwidth}+ \frac{  2\langle y_{k}(t+1) - x_{k}(t),  x_{k}(t) - x'_{k}(t)  \rangle }{ 2M_t^2 } \nonumber \\
&= \sum_{t} \sum_{k \in [N]} \frac{  \| x_{k}(t) - x'_{k}(t)  \|^2 }{ 2M_t^2 } +\sum_{t} \sum_{k \in [N]} \langle n_{k}(t), \frac{ ( x_{k}(t) - x'_{k}(t) ) }{ M_t^2 } \rangle \nonumber \\
&\stackrel{(a)}= \sum_{t} \frac{  \| x_{k^*}( t ) - x'_{k^*}( t )  \|^2 }{ 2M_t^2 } + \sum_{t} \langle n_{k^*}(t), \frac{ ( x_{k^*}(t) - x'_{k^*}(t) ) }{ M_t^2 } \rangle, \nonumber
\end{align} where $(a)$ follows because only one of the cost functions is different among neighboring databases, therefore at most one of the these terms is non-zero (note that we are conditioning on the same observations $\{ \mathbf{y}(t) \}$ across two neighboring problems). 

Recall from the definition of $\Delta^2(t)$ \eqref{eq:def:conditionalsens},
\begin{align}
\sum_{t} \frac{  \| x_{k}( t  ) - x'_{k}(t)  \|^2 }{ M^2(t) } \leq \sum_{t} \frac{  \Delta^2(t) }{ M^2(t) } \stackrel{\triangle}= \alpha, \label{eq:thm:dp:alphadef}
\end{align} therefore by putting \eqref{eq:thm:dp:c_def} and \eqref{eq:thm:dp:alphadef} together, 
\begin{align}
    c( \mathbf{y}(1), \cdots, \mathbf{y}(T) ) \leq \frac{\alpha}{2} + \sum_{t=1}^{T} \langle n_{k^*}(t), \frac{ ( x_{k^*}(t) - x'_{k^*}(t) ) }{ M_t^2 } \rangle. \label{eq:thm:dp:proof:c}
\end{align} In order to bound $c$ we first show the second term in \eqref{eq:thm:dp:proof:c} $C_T \stackrel{\triangle}= \sum_{t=1}^{T} \langle n_{k^*}(t), \frac{ ( x_{k^*}(t) - x'_{k^*}(t) ) }{ M_t^2 } \rangle$ is sub-Gaussian.
\begin{definition}[sub-Gaussian]
A zero mean random variable $X$ is sub-Gaussian\footnote{There exists several equivalent definitions of sub-Gaussianity in the literature, see for example \cite{HDP} Proposition 2.5.2} if for some $\sigma > 0$,
\begin{align}
    \Expt e^{\lambda X} \leq e^{\sigma^2 \lambda^2/2}, \forall \lambda \in \mathbb{R}. \label{eq:def:subgaussian}
\end{align}
\end{definition}

\begin{lemma}
The second term in \eqref{eq:thm:dp:proof:c}, $C_T$ is sub-Gaussian with the parameter bounded by $\sqrt{\alpha}$, where $\alpha$ is defined in \eqref{eq:thm:dp:alphadef}. \label{lemma:azuma}
\end{lemma}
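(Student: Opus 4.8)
<br>

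\textbf{Proof proposal.} The plan is to recognize $C_T$ as a sum of martingale differences with respect to the filtration generated by the injected noise, and then bound its moment generating function by iterated conditioning, which is exactly the structure behind a sub-Gaussian (Azuma-type) tail bound. Concretely, set $\mathcal{F}_t := \sigma(\mathbf{n}(1), \dots, \mathbf{n}(t-1))$, with $\mathcal{F}_1$ trivial. The key structural observation is that, by the update rules \eqref{eq:def:noise}, \eqref{eq:def:consensus}, \eqref{eq:def:GD}, the message $\mathbf{y}(t)$ is produced \emph{before} $\mathbf{n}(t)$ is drawn, and the consensus, gradient, and projection maps are deterministic; hence $x_{k^*}(t)$ (and likewise $x'_{k^*}(t)$ for the neighboring instance) is a deterministic function of $\mathbf{y}(1),\dots,\mathbf{y}(t)$, and therefore of $\mathbf{n}(1),\dots,\mathbf{n}(t-1)$ together with the deterministic initialization. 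Consequently the vector $v_t := \bigl(x_{k^*}(t) - x'_{k^*}(t)\bigr)/M_t^2$ is $\mathcal{F}_t$-measurable, while $n_{k^*}(t) \sim \mathcal{N}(0, M_t^2 I_p)$ is independent of $\mathcal{F}_t$.

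First I would record that $X_t := \langle n_{k^*}(t), v_t\rangle$ satisfies $\Expt[X_t \mid \mathcal{F}_t] = 0$, so $\{X_t\}_{t=1}^T$ is a martingale difference sequence and $C_T = \sum_{t=1}^T X_t$. Next I would compute the conditional MGF: conditioned on $\mathcal{F}_t$, $v_t$ is fixed, so $\langle n_{k^*}(t), v_t\rangle$ is a centered Gaussian with variance $M_t^2 \|v_t\|^2$, giving $\Expt[e^{\lambda X_t} \mid \mathcal{F}_t] = \exp\!\bigl(\tfrac{1}{2}\lambda^2 M_t^2 \|v_t\|^2\bigr)$. Since $M_t^2\|v_t\|^2 = \|x_{k^*}(t) - x'_{k^*}(t)\|^2 / M_t^2 \le \Delta^2(t)/M_t^2$ almost surely — the bound in Definition \ref{def:conditionalsens} holds pointwise in the conditioning messages, which is exactly the instance of the realized $\mathbf{y}(\cdot)$ — we obtain $\Expt[e^{\lambda X_t}\mid \mathcal{F}_t] \le \exp\!\bigl(\lambda^2 \Delta^2(t)/(2M_t^2)\bigr)$.

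Finally I would unroll via the tower property: $\Expt[e^{\lambda C_t}] = \Expt\bigl[e^{\lambda C_{t-1}}\, \Expt[e^{\lambda X_t}\mid \mathcal{F}_t]\bigr] \le \exp\!\bigl(\lambda^2 \Delta^2(t)/(2M_t^2)\bigr)\, \Expt[e^{\lambda C_{t-1}}]$, and iterating from $t=T$ down to $t=1$ yields $\Expt[e^{\lambda C_T}] \le \exp\!\bigl(\tfrac{\lambda^2}{2}\sum_{t=1}^T \Delta^2(t)/M_t^2\bigr) = e^{\lambda^2 \alpha/2}$ for every $\lambda \in \mathbb{R}$, which is precisely \eqref{eq:def:subgaussian} with parameter $\sqrt{\alpha}$. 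The one delicate point — and the step I expect to warrant the most care — is the measurability claim that $v_t$ depends on the randomness only through $\mathbf{n}(1),\dots,\mathbf{n}(t-1)$; once that is nailed down (it is just the ``messages are generated before the next noise'' observation already used in the $(a)$ step of the pdf factorization in the proof of Theorem \ref{thm:dp}), the remainder is routine sub-Gaussian martingale bookkeeping.
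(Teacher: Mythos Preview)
Your proposal is correct and follows essentially the same route as the paper: identify each summand as a martingale difference with respect to the noise filtration, compute its conditional MGF using that $n_{k^*}(t)$ is Gaussian and independent of the past while $x_{k^*}(t)-x'_{k^*}(t)$ is measurable with respect to the past, bound the conditional variance by $\Delta^2(t)/M_t^2$, and unroll by the tower property. The only cosmetic difference is your filtration indexing (your $\mathcal{F}_t$ is the paper's $\mathcal{F}_{t-1}$), and you make the measurability of $v_t$ more explicit than the paper does.
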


\begin{proof}
Recall the definition of $C_T$ and let us define $w_t$ as follows:
\begin{align*}
C_T &\stackrel{\triangle}= \sum_{t=1}^{T} \langle n_{k^*}(t), \frac{ ( x_{k^*}(t) - x'_{k^*}(t) ) }{ M_t^2 } \rangle, \\
w_t &\stackrel{\triangle}= \Expt[C_T | \mathcal{F}_{t} ] -  \Expt[C_T | \mathcal{F}_{t-1} ] = \langle n_{k^*}(t), \frac{ ( x_{k^*}(t) - x'_{k^*}(t) ) }{ M_t^2 } \rangle,
\end{align*} where $\mathcal{F}_{t}$ is the sigma algebra generated by $n_{k}(1), \cdots, n_{k}(t)$. Note that $w_t$ is conditionally sub-Gaussian with $\sigma_t \leq \Delta(t)/M_t$ since:
\begin{align}
\Expt[ e^{\lambda w_t} | n_{k}(1), \cdots, n_{k}(t-1) ] &\stackrel{(a)}= e^{ \frac{ \| x_{k}(t) - x'_{k}(t) \|^2}{M^2_t} \lambda^2/2 } \nonumber \\
& \stackrel{(b)}\leq e^{ \frac{ \Delta^2(t)}{ M^2_t} \lambda^2/2 }, \quad \forall \lambda \in \mathbb{R} \label{eq:lemma:azuma}
\end{align} here $(a)$ follows since $n_{k}(t)$ is an i.i.d. zero mean random Gaussian vector with variance of $M_t^2$ and it's independent of $\mathcal{F}_{t-1}$, and definition of $\Delta^2(t)$ implies $(b)$. Taking the conditional expectation of $C_T$ with respect to $\mathcal{F}_{t-1}$ implies,
\begin{align}
    \Expt[ e^{\lambda C_T} | \mathcal{F}_{T-1} ] &=  e^{\lambda \sum_{t = 1}^{T-1} w_t } \Expt[e^{\lambda w_T}| \mathcal{F}_{T-1}] \nonumber \\
    &\stackrel{(a)}\leq e^{\lambda \sum_{t = 1}^{T-1} w_t } e^{ \frac{ \Delta^2(T)}{ M^2_T} \lambda^2/2 }, \forall \lambda \in \mathbb{R} \label{eq:lemma:azuma:expt1}
\end{align} where $(a)$ follows from \eqref{eq:lemma:azuma}. By repeating the same argument and taking the conditional expectation with respect to $\mathcal{F}_{T-1}$, $\mathcal{F}_{T-2}$ up until $\mathcal{F}_{1}$ we conclude that,
\begin{align*}
    \Expt[ e^{\lambda C_T} ]  &\leq e^{ \sum_{t = 1}^{T} \frac{ \Delta^2(t)}{ M^2_t} \lambda^2/2 } \\
    &\leq e^{\alpha \lambda^2/2}, \quad \forall \lambda \in \mathbb{R}.
\end{align*}
\end{proof}
In order to bound the privacy random variable, we use the following tail bound for sub-Gaussian random variables.
\begin{proposition}
Assume $X$ is a random variable satisfying \eqref{eq:def:subgaussian}. Then we have
\begin{align}
    \Prob{X > t} \leq e^{- \frac{t^2}{2 \sigma^2}}, \forall t \geq 0, \label{eq:prop:subgassian}
\end{align} \label{prop:subgaussian}
\end{proposition}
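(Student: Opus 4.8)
The plan is to use the standard Cram\'er--Chernoff (exponential Markov) argument. The case $t = 0$ is immediate, since the right-hand side equals $1$, so assume $t > 0$. First I would fix an arbitrary $\lambda > 0$ and apply Markov's inequality to the nonnegative random variable $e^{\lambda X}$, using that $\{ X > t \} = \{ e^{\lambda X} > e^{\lambda t} \}$:
\begin{align}
\Prob{X > t} = \Prob{ e^{\lambda X} > e^{\lambda t} } \leq e^{-\lambda t}\, \Expt e^{\lambda X}. \nonumber
\end{align}

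Next I would invoke the sub-Gaussian hypothesis \eqref{eq:def:subgaussian}, which gives $\Expt e^{\lambda X} \leq e^{\sigma^2 \lambda^2 / 2}$, so that
\begin{align}
\Prob{X > t} \leq e^{\,\sigma^2 \lambda^2/2 \,-\, \lambda t} \qquad \text{for all } \lambda > 0. \nonumber
\end{align}
Since this holds for every positive $\lambda$, I would then minimize the exponent $g(\lambda) = \sigma^2 \lambda^2/2 - \lambda t$ over $\lambda > 0$. This is a convex quadratic with $g'(\lambda) = \sigma^2 \lambda - t$, vanishing at $\lambda^\ast = t/\sigma^2 > 0$, where $g(\lambda^\ast) = -t^2/(2\sigma^2)$. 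Substituting $\lambda = \lambda^\ast$ yields $\Prob{X > t} \leq e^{-t^2/(2\sigma^2)}$, which is the claim.

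There is essentially no obstacle here; the only points worth a moment of care are (i) restricting to $\lambda > 0$ before applying Markov, so that exponentiating preserves the event, and (ii) checking that the optimizer $\lambda^\ast$ is genuinely positive, which holds because $t > 0$ and $\sigma^2 > 0$. One may equally well skip the explicit optimization and simply substitute the clean (if a priori non-optimal) choice $\lambda = t/\sigma^2$ into the bound above, which is likely how the appendix presents it. This tail bound is exactly what is needed, together with Lemma \ref{lemma:azuma}, to control the privacy loss variable $c(\mathbf{y}(1), \dots, \mathbf{y}(T))$ in \eqref{eq:thm:dp:proof:c} and thereby close the proof of Theorem \ref{thm:dp}.
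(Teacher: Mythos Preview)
Your proof is correct and matches the paper's approach exactly: the appendix simply says the result follows by applying the Chernoff bound together with the sub-Gaussian condition \eqref{eq:def:subgaussian}, which is precisely the Cram\'er--Chernoff argument you have written out in full.
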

\begin{proof}
Proof follows directly by applying Chernoff bound along with condition in \eqref{eq:def:subgaussian}.
\end{proof}

Now we are ready to prove the claim. We show that $\Prob{ | c | \geq \epsilon } \leq \delta$ where $c \stackrel{\triangle}= c( \mathbf{y}(1), \cdots, \mathbf{y}(T) )$ as in \eqref{eq:thm:dp:c_def}. Note that,

\begin{align}
\Prob{ c \geq + \epsilon } &\stackrel{(a)}\leq \Prob{ \frac{\alpha}{2} + C_T \geq \epsilon } \stackrel{(b)}\leq e^{- ( \epsilon - \frac{\alpha}{2} )^2/{2 \alpha}}, \label{eq:thm:dp:proof:conceq1} \\
\Prob{ c \leq -\epsilon } &\stackrel{(c)}\leq \Prob{ C_T \leq - \epsilon  } \stackrel{(d)}\leq e^{- \frac{ \epsilon ^2 }{2 \alpha}}, \label{eq:thm:dp:proof:conceq2}
% \leq e^{- \frac{ ( \epsilon - \frac{\alpha}{2} )^2 }{2 \alpha}}
\end{align} where $(a)$ follows from inequality \eqref{eq:thm:dp:proof:c}, $(b)$ is the direct consequence of Proposition \ref{prop:subgaussian}, $(c)$ comes from $c \geq C_T$ and we use the fact that $-C_T$ is a sub-Gaussian random variable along with Proposition \ref{prop:conditionalsens} in $(d)$. Therefore, 
\begin{align}
\Prob{ | c | \geq \epsilon } &= \Prob{ c \geq \epsilon } + \Prob{ c \leq -\epsilon } \leq 2 e^{- \frac{( \epsilon - \frac{\alpha}{2} )^2 }{2 \alpha}} \stackrel{(a)}\leq \delta. \label{eq:thm:dp:proof:last}
\end{align}  It is straightforward to verify that $(a)$ holds for
\begin{align*}
0 \leq \alpha \leq 2\epsilon + 4 \log\frac{2}{\delta} - 2\sqrt{\left( \epsilon + 2 \log\frac{2}{\delta} \right)^2 - \epsilon^2} \tag{*}
\end{align*}
By using the inequality $a/2 \leq 1 - \sqrt{1 - a}$ for $|a|\leq 1$ we conclude that an specific choice of $\alpha = \frac{\epsilon^2}{\epsilon + 2 \log \frac{2}{\delta}}$ in the statement of Theorem \ref{thm:dp} lies in $(*)$.
\end{proof}

\begin{proof}[Proof of Lemma \ref{lemma:dist_mean}]
Compared to the classical DGD (see for example \cite{wotao16, dist_09}) we have an additional projection operator that we need to take into account. In this case, with a minor modification we can still bound the distance of individual's $z_i$ to the average parameter.
%Let us denote the weighted sum of $y_j$'s at node $i$ with $\hat{z}_i$, in other word $z_i = \Proj_{\mathcal{X}} \hat{z}_i$. 
Let us rewrite $x_i(t)$ as follows:
\begin{align}
x_i(t) = \Proj_{\mathcal{X}} \lp z_i(t) - \eta_t \nabla f_i(z_i(t)) \rp &= z_i(t)  + v_i(t) \label{eq:dis_mean1} \\
&= \hat{z}_t(t) + u_i(t) + v_i(t), \nonumber
\end{align} where $\hat{z}_i(t) \stackrel{\triangle}= \sum_{j \in \mathcal{N}_i} w_{i j} y_j(t)$ and $u_i(t)$ and $v_i(t)$ are defined as 
\begin{align}
v_i(t) &\stackrel{\triangle}= \Proj_{\mathcal{X}} \lp z_i(t) - \eta_t \nabla f_i(z_i(t)) \rp - z_i(t), \label{eq:lemma:dist_mean:defv} \\
u_i(t) &\stackrel{\triangle}= z_i(t) -\hat{z}_i(t). \label{eq:lemma:dist_mean:defu}
\end{align} We use the notation $\mathbf{\hat{z}}(t) = [\hat{z}_1(t); \cdots, \hat{z}_N(t)]$, $\mathbf{u}(t) = [u_1(t); \cdots; u_N(t)]$ and $\mathbf{v}(t) = [v_1(t); \cdots; v_N(t)]$. In the rest of proof, we rewrite the mean parameter using Kronecker product $\bar{z}(t) = \frac{1}{N} \lp \mathbf{1}_N^T \otimes I_P \rp \mathbf{z}(t)$. In order to bound $\| \mathbf{z}(t) - \frac{1}{N} \lp \mathbf{1}_N \mathbf{1}_N^T  \otimes I_{p} \rp \mathbf{z}(t) \|$, we first  present a lemma that bounds $\| \mathbf{\hat{z}}(t) - \frac{1}{N} \lp \mathbf{1}_N \mathbf{1}_N^T \otimes I_{p} \rp \mathbf{\hat{z}}(t) \|$.% and the result follows immediately.

\begin{lemma}
Under Assumption \ref{asump:bddgrad}, at any time $t \leq T$, the following holds:
\begin{align}
 \| \mathbf{\hat{z}}(t) - \frac{1}{N} \lp \mathbf{1}_N \mathbf{1}_N^T  \otimes I_{p} \rp &\mathbf{\hat{z}}(t) \| \\ 
 &\leq 2\sum_{s=1}^{t-1} \beta^{t-s} \| \mathbf{n}(s) \|  + \sqrt{N} G \sum_{s=1}^{t-1} \eta_s  \beta^{t-s}. \nonumber
\end{align} \label{lemma:zhat}
\end{lemma}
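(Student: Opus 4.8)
The plan is to run the standard disagreement analysis of distributed gradient descent (as in \cite{dist_09,wotao16}) on the consensus iterate $\mathbf{\hat z}(t) = (W\otimes I_p)\mathbf{y}(t)$, while carrying along two extra correction vectors that account for the projection operators in \eqref{eq:def:consensus} and \eqref{eq:def:GD}. Write $J := \frac{1}{N}\mathbf{1}_N\mathbf{1}_N^T$ and $P_\perp := I_N - J$, so that the left-hand side of the claim is exactly $\|(P_\perp\otimes I_p)\mathbf{\hat z}(t)\|$, and recall $\mathbf{y}(1)=0$.

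First I would peel off the two corrections using non-expansiveness of the Euclidean projection. Since $\mathbf{x}(t)$ is obtained from $\mathbf{z}(t)\in\mathcal{X}^N$ by one projected gradient step and $\|\nabla f_i\|_2\le G$ by Assumption \ref{asump:bddgrad}, one can write $\mathbf{x}(t) = \mathbf{z}(t)+\mathbf{v}(t)$ with $\|\mathbf{v}(t)\|\le \sqrt{N}\,G\,\eta_t$. For the consensus step, each $\hat z_i(t)=\sum_j w_{ij}y_j(t)$ is, by \eqref{eq:def:noise}, a convex combination of the noisy points $x_j(t-1)+n_j(t-1)$; since $\sum_j w_{ij}x_j(t-1)\in\mathcal{X}$ by convexity of $\mathcal{X}$ (Assumption \ref{asump:domain}), the distance of $\hat z_i(t)$ to $\mathcal{X}$ is at most $\|\sum_j w_{ij}n_j(t-1)\|$, whence $\mathbf{z}(t)=\mathbf{\hat z}(t)+\mathbf{r}(t)$ with $\|\mathbf{r}(t)\|\le\|(W\otimes I_p)\mathbf{n}(t-1)\|\le\|\mathbf{n}(t-1)\|$ (using $\|W\|=1$) and $\mathbf{r}(1)=0$.

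Next I would exploit double stochasticity of $W$ (Assumption \ref{asump:weightmatrix}): $WJ=JW=J$, so $P_\perp W = W-J$, whose operator norm is $\beta$, and $(W-J)J=0$. Substituting $\mathbf{x}(t)=\mathbf{\hat z}(t)+\mathbf{r}(t)+\mathbf{v}(t)$ into $\mathbf{\hat z}(t+1) = (W\otimes I_p)\big(\mathbf{x}(t)+\mathbf{n}(t)\big)$ and applying $(P_\perp\otimes I_p)$ then yields the one-step contraction
\begin{align*}
\|(P_\perp\otimes I_p)\mathbf{\hat z}(t+1)\| \le \beta\,\|(P_\perp\otimes I_p)\mathbf{\hat z}(t)\| + \beta\|\mathbf{r}(t)\| + \beta\|\mathbf{v}(t)\| + \beta\|\mathbf{n}(t)\|.
\end{align*}
Unrolling this from $t$ down to $1$ (the leading term vanishing since $\mathbf{\hat z}(1)=0$) gives $\|(P_\perp\otimes I_p)\mathbf{\hat z}(t)\|\le\sum_{s=1}^{t-1}\beta^{t-s}\big(\|\mathbf{r}(s)\|+\|\mathbf{v}(s)\|+\|\mathbf{n}(s)\|\big)$; inserting $\|\mathbf{v}(s)\|\le\sqrt{N}G\eta_s$ produces the term $\sqrt{N}G\sum_{s=1}^{t-1}\eta_s\beta^{t-s}$, and the noise --- which appears once directly as $\mathbf{n}(s)$ and once through the correction $\mathbf{r}(s)$ (a one-round-delayed copy of the noise) --- combines, after this re-indexing and using $\beta\le 1$, into at most $2\sum_{s=1}^{t-1}\beta^{t-s}\|\mathbf{n}(s)\|$, which is the asserted bound.

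The main obstacle is exactly the novelty relative to the classical DGD disagreement estimate: the projection operators in \eqref{eq:def:consensus}--\eqref{eq:def:GD}. The key point to get right is the second bound above --- that projecting the averaged messages onto $\mathcal{X}$ displaces the iterate by no more than the injected noise --- since this is what keeps the correction $\mathbf{r}(t)$ controllable and is responsible for the factor $2$ in front of the noise sum; beyond that, keeping straight the one-round offset between when $\mathbf{n}(s)$ is injected and when it influences $\mathbf{\hat z}$ is the other place where care is needed.
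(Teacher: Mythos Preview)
Your approach is essentially the same as the paper's: it too introduces two correction vectors for the two projections (the paper calls them $\mathbf{v}(t)$ and $\mathbf{u}(t)$, the latter being your $\mathbf{r}(t)$), bounds them by $\sqrt{N}G\eta_t$ and by the norm of the injected noise respectively via non-expansiveness of $\Proj_{\mathcal{X}}$, and then unrolls the linear recursion $\mathbf{\hat z}(t)=(W\otimes I_p)\big(\mathbf{\hat z}(t-1)+\mathbf{u}(t-1)+\mathbf{v}(t-1)+\mathbf{n}(t-1)\big)$ using $\|W-J\|=\beta$. Your one-step contraction formulation is a slightly cleaner packaging of the same expansion, and you are actually more explicit than the paper about the one-round offset between the injected noise and the projection correction.
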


\begin{proof}
Observer that plugging \eqref{eq:dis_mean1} into \eqref{eq:def:consensus} we can write:
\begin{align}
\mathbf{\hat{z}}(t) = \lp W \otimes I_{p} \rp ( \mathbf{\hat{z}}(t-1) + & \mathbf{u}(t-1) + \mathbf{v}(t-1) )  \nonumber \\
+ & \lp W \otimes I_{p} \rp \mathbf{n}(t-1), \label{eq:lemma:zhat:proof:recur}
\end{align} where $\mathbf{u}$ and $\mathbf{v}$ are defined according to \eqref{eq:lemma:dist_mean:defu} and \eqref{eq:lemma:dist_mean:defv} respectively.

We proceed by bounding each of $\mathbf{u}(t)$ and $\mathbf{v}(t)$ for $t \leq T$. It is clear that both terms are zero in Stage II of the algorithm.
\noindent\textbf{Bounding $\mathbf{u}$:} Note that
\begin{align}
\| v_i (t) \|_2 &= \| \Proj_{\mathcal{X}} \lp z_i(t) - \eta_t \nabla f_i(z_i(t)) \rp - z_i(t)\| \nonumber \\ 
&\stackrel{(a)}= \| \Proj_{\mathcal{X}} \lp z_i(t) - \eta_t \nabla f_i(z_i(t)) \rp - \Proj_{\mathcal{X}}z_i(t) \| \nonumber \\
&\stackrel{(b)}\leq \| \eta_t \nabla f_i(z_i(t)) \|_2 \nonumber \\
&\stackrel{(c)}\leq  G \eta_t, \nonumber
\end{align} where $(a)$ holds since $z_i(t) \in \mathcal{X}$, $(b)$ follows from non-expansiveness property of the Euclidean projection and Assumption \ref{asump:bddgrad} implies $(c)$. Therefore,
\begin{align}
\| \mathbf{v} (t) \|_2 &=\sqrt{ \sum_{i}^{N} \| v_i(t) \|_2^2  } \leq  G \sqrt{N} \eta_t. \label{eq:lemma:zhat:v}
\end{align} In order to bound $\mathbf{u}(t)$,
\begin{align}
\| u_i(t) \|_2 = \| z_i - \hat{z}_i \|_2 &= \| \Proj_{\mathcal{X}} \hat{z}_i - \hat{z}_i  \|_2 \nonumber \\ &\stackrel{(a)}\leq  \| \lp W_i \otimes I_{p} \rp \mathbf{n}(t) \|_2, \nonumber
\end{align} where $(a)$ follows from $\| \Proj_{\mathcal{X}} a - b \| \leq \| c - b \|, \forall c \in \mathcal{X}$\footnote{This property follows directly from the definition of projection \eqref{eq:def:proj}} and the fact that $\sum_{j}w_{i,j} x_j(t) \in \mathcal{X}$, $W_i$ is the $i$th row of the weight matrix. Therefore, we have the following bound on $\| \mathbf{u} \|^2$:
\begin{align}
\| \mathbf{u} \|^2 = \| \lp W \otimes I_{p}  \rp \mathbf{n}(t) \|^2 \leq  \| \mathbf{n}(t) \|^2. \label{eq:lemma:zhat:u}  %\quad \Rightarrow \Expt[ \| \delta \|_2^2] \leq {N d} M^2_t.
\end{align} The rest follows from the classical case \cite{dist_09, wotao16}, by bounding the deviation of $\hat{z}_i$ from $\frac{1}{N} \sum_{i = 1}^{N} \hat{z}_i$ assuming zero initial states\footnote{Without loss of generality we assume the initial condition is zero, otherwise there is an extra term that goes to zero exponentially fast.} by expanding \eqref{eq:lemma:zhat:proof:recur},
\begin{align}
 \| \mathbf{\hat{z}}(t) - &\frac{1}{N} \lp \mathbf{1}_N \mathbf{1}_N^T \otimes I_{p} \rp \mathbf{\hat{z}}(t) \| \nonumber \\ 
 &\stackrel{(a)}= \| \sum_{s=0}^{t-1} \lp W^{t-s} \otimes I_{p}  - \frac{1}{N} \mathbf{1}_N \mathbf{1}_N^T \otimes I_{p} \rp \mathbf{n}(s) \nonumber \nonumber \\ 
 & \hspace{0.05\textwidth} + \sum_{s=1}^{t-1}   \lp W^{t-s} \otimes I_{p}  - \frac{1}{N} \mathbf{1}_N \mathbf{1}_N^T \otimes I_{p} \rp \lp \mathbf{u}(s) + \mathbf{v}(s)  \rp \| \nonumber \\
&\stackrel{(b)}\leq \sum_{s=0}^{t-1}  \| W^{t-s} \otimes I_{p}  - \frac{1}{N} \mathbf{1}_N \mathbf{1}_N^T \otimes I_{p} \| \| \mathbf{n}(s) \|  \nonumber \\
 & \hspace{0.07\textwidth} +\sum_{s=1}^{t-1} \| W^{t-s} \otimes I_{p}  - \frac{1}{N} \mathbf{1}_N \mathbf{1}_N^T \otimes I_{p} \| \|  \mathbf{u}(s) + \mathbf{v}(s) \| \nonumber \\
 &\stackrel{(c)}\leq \sum_{s=0}^{t-1} \beta^{t-s} \| \mathbf{n}(s) \|  +  \sum_{s=1}^{t-1}  \beta^{t-s} \| \mathbf{u}(s) \| +  \sum_{s=1}^{t-1}  \beta^{t-s} \| \mathbf{v}(s) \| \nonumber \\
 &\stackrel{(d)}\leq \sum_{s=0}^{t-1} \beta^{t-s} \| \mathbf{n}(s) \| +  \sum_{s=1}^{t-1} M_s \beta^{t-s} \| \mathbf{n}(s) \| +  \sqrt{N} G  \sum_{s=1}^{t-1} \eta_s  \beta^{t-s}  \nonumber \\
 &= \sum_{s=1}^{t-1} 2\beta^{t-s} \| \mathbf{n}(s) \|  + \sqrt{N} G \sum_{s=1}^{t-1} \eta_s  \beta^{t-s},
\end{align} where $(a)$ holds since $W$ is doubly stochastic, $(b)$ follows from the triangle inequality together with the inequality $\|Ax\| \leq \|A\| \|x\|$ where $\|A\|$ denotes the operator norm\footnote{$\|Ax\| \leq \|A\| \|x\|$ holds for matrix $A \in \mathbb{R}^{n \times n}$ and $x \in \mathbb{R}^n$ where $\|A\|$ is the operator norm of a matrix.}, we use the spectral property of the weight matrix in $(c)$ and $\beta$ is defined as the second largest eigenvalue of $W$, we plugged in \eqref{eq:lemma:zhat:u} and \eqref{eq:lemma:zhat:v} to derive $(d)$.
\end{proof}

Having set Lemma \ref{lemma:zhat}, we bound the the deviation of $z_i(t)$ from the mean parameter for Stage I as follows.
\begin{align}
\| \mathbf{z}(t)& - \frac{1}{N} \lp \mathbf{1}_N \mathbf{1}_N^T  \otimes I_{p} \rp \mathbf{z}(t) \|  \\
&= \| \lp I_{pN} - \frac{1}{N} \lp \mathbf{1}_N \mathbf{1}_N^T  \otimes I_{p} \rp \rp \lp \mathbf{\hat{z}}(t) + \mathbf{u}(t) \rp \| \nonumber \\
&\leq  \| \mathbf{\hat{z}}(t) - \frac{1}{N} \lp \mathbf{1}_N \mathbf{1}_N^T  \otimes I_{p} \rp \mathbf{\hat{z}}(t) \| \nonumber\\
&\hspace{0.12\textwidth}+ \| \lp I_{pN} - \frac{1}{N} \lp \mathbf{1}_N \mathbf{1}_N^T  \otimes I_{p} \rp \rp  \mathbf{u}(t) \| \nonumber \\
&\leq \sum_{s=1}^{t-1} 2 M_s  \beta^{t-s} \| \mathbf{n}(s) \|  + \sqrt{N} G \sum_{s=1}^{t-1} \eta_s  \beta^{t-s} + \| \mathbf{u}(t) \|, \nonumber
\end{align} where we used Lemma \ref{lemma:zhat}, \eqref{eq:lemma:zhat:u} along with triangle inequality and the inequality $\|Ax\| \leq \|A\| \|x\|$.

In Stage II, $\mathbf{x}(t) = \lp W \otimes I_P \rp \mathbf{x}(t-1)$ for $t > T$, therefore
\begin{align}
    \| \mathbf{x}(t) - \frac{1}{N} &\lp \mathbf{1}_N \mathbf{1}_N^T  \otimes I_{p} \rp \mathbf{x}(t) \| \nonumber \\
    &= \| ( W^{t-T}\otimes I_p - \frac{1}{N} \lp \mathbf{1}_N \mathbf{1}_N^T  \otimes I_{p} \rp ) \mathbf{x}(T) \|  \nonumber
    \\ &\leq \beta^{t-T} \mathbf{x}(T),
\end{align} where we use the spectral property of $W$.

\end{proof}

\begin{proof}[Proof of Lemma \ref{lemma:sum:noise}]
Let us define $\hat{z}_i(t) \stackrel{\triangle}= \sum_{j \in \mathcal{N}_i} w_{i j} y_j(t)$, \emph{i.e.}, $z_i(t) = \Proj_{\mathcal{X}} \hat{z}_i(t) $ therefore
\begin{align}
    \| z_i(t) - x \| &\stackrel{(a)}= \| \Proj_{\mathcal{X}} \hat{z}_i(t) - \Proj_{\mathcal{X}} x \| \nonumber \\
    &\stackrel{(b)}\leq \| \hat{z}_i(t) - x \|^2, \label{eq:lemma:sum:noise:proof1}
\end{align} where $(a)$ follows since $x \in \mathcal{X}$ and we used non-expansiveness property of the projection in $(b)$. Summing up both sides of \eqref{eq:lemma:sum:noise:proof1} results in
\begin{align}
\sum_{i \in [N] } \| z_i(t) - x \|^2 &\leq \sum_{i \in [N] } \| \hat{z}_i(t) - x \|^2  \label{eq:lemma:sum:noise:proof2}  \\
&= \| \mathbf{\hat{z}}(t) - \mathbf{1}_N \otimes x  \|^2  \nonumber \\
&\stackrel{(a)}= \| \lp W \otimes I_{p} \rp y(t) - \mathbf{1}_N \otimes x  \|^2 \nonumber \\
&\stackrel{(b)}= \| \lp W \otimes I_{p} \rp \lp y(t) -  \mathbf{1}_N \otimes x \rp \|^2 \nonumber \\
&\stackrel{(c)}\leq  \| y(t) - \mathbf{1}_N \otimes x  \|^2 = \sum_{i \in [N]} \| y_i(t) - x \|^2, \nonumber
\end{align} where $(a)$ follows directly from definition of $\hat{z}_i(t)$, $W$ being doubly stochastic implies $(b)$ and $(c)$ is from the spectral properties of $W$. 

Note that,
\begin{align}
    \| y_i(t) - x \|^2 = \| x_i(t-1) - x \|^2 &+ \| n_i(t) \|^2  \label{eq:lemma:sum:noise:proof3} \\
    &+ 2 \langle x_i(t-1) - x, n_i(t) \rangle. \nonumber
\end{align} The proof is complete by plugging \eqref{eq:lemma:sum:noise:proof3} into \eqref{eq:lemma:sum:noise:proof2} together and taking the expectation from both sides.

\end{proof}

\end{document}